\documentclass{article}

\usepackage[preprint]{neurips_2026}

\usepackage[utf8]{inputenc}
\usepackage[T1]{fontenc}
\usepackage{hyperref}
\usepackage{url}
\usepackage{booktabs}
\usepackage{amsmath,amssymb,amsfonts,amsthm}
\usepackage{nicefrac}
\usepackage{microtype}
\usepackage{xcolor}
\usepackage{graphicx}
\usepackage{algorithm}
\usepackage{algpseudocode}
\usepackage{mdframed}

\newmdenv[
  backgroundcolor=black!4,
  linecolor=black!25,
  linewidth=0.4pt,
  innerleftmargin=8pt,
  innerrightmargin=8pt,
  innertopmargin=6pt,
  innerbottommargin=6pt,
  skipabove=6pt,
  skipbelow=6pt
]{takeaway}

\hypersetup{
    colorlinks=true,
    linkcolor=blue!60!black,
    citecolor=blue!60!black,
    urlcolor=blue!60!black,
}

\newcommand{\reals}{\mathbb{R}}

\newcommand{\F}{\mathcal{F}}

\newcommand{\Lag}{\mathcal{L}}
\newcommand{\tp}{^\top}

\newtheorem{proposition}{Proposition}

\newtheorem{corollary}{Corollary}
\theoremstyle{remark}

\title{Augmented Lagrangian Predictive Coding}

\author{%
  Jeffrey Seely \\
  Sakana AI \\
  \texttt{jeffrey@sakana.ai} \\
  \And
  Julian Gould \\
  Sakana AI \\
  \texttt{juliangould@sakana.ai} \\
}

\begin{document}
\maketitle

\begin{abstract}
Predictive coding (PC) is a local-learning alternative to backpropagation (BP), training deep networks via local energy-minimization dynamics rather than a global backward pass. We introduce Augmented Lagrangian Predictive Coding (PC-ALM), which maintains PC's inference budget but aligns each weight update toward BP by accumulating per-layer constraint errors into a layer-local Lagrange multiplier. In linear PC networks, PC-ALM converges to an equilibrium with exact BP gradients distributed across the network via only layer-local updates. We analyze PC-ALM in nonlinear PC networks up to depth 128 and show that it matches BP performance across all width-depth regimes, notably in deep narrow networks where PC underperforms. PC-ALM introduces recurrent dynamics in each layer's activations. Compared to PC's heat flow on a scalar energy, PC-ALM dynamics are driven by dual ascent on the augmented Lagrangian. We observe ``ballistic'' credit propagation across very deep networks, with credit signals evenly distributed across layers, compared to PC's slow, diffusive credit propagation. Beyond the algorithm itself, the augmented Lagrangian framework offers a generalization of PC, and may yield insights into how distributed systems could compute and propagate BP-like credit signals through purely local dynamics.

\end{abstract}

\section{Introduction}
\label{sec:intro}

Modern deep learning relies on backpropagation (BP). Yet biological circuits cannot implement BP, at least not exactly \citep{lillicrap2020backpropagation}. One reason is that BP is a \emph{global} algorithm: even though BP's chain rule spatially localizes updates, the network must still create error variables in parallel with the matching forward activations, then gate weight updates in a precisely orchestrated sequence of steps.
This gap has motivated research into \emph{local} alternatives to BP, where state, error, and weight variables evolve as a single dynamical system that requires only layer-local coupling.
Predictive coding (PC) is one of the more actively researched layer-local alternatives to BP \citep{whittington2017approximation, millidge2021predictive, salvatori2025survey}.
In a Predictive Coding Network (PCN), a mean-squared error term is introduced between each layer, the sum of which is the PC energy. Gradient descent on this energy yields a layer-local dynamical system for updating the activation states (PC inference) and weights (PC learning). At fixed weights, the activation equilibrium yields a different activation pattern than one would get from a typical forward pass, resulting in a corresponding weight update that is aligned with, but not equal to, the BP gradient.

In this paper, we frame PC within a broader class of local learning methods involving the augmented Lagrangian (AL).
Dual ascent on the AL---known as the method of multipliers (MM) \citep{hestenes1969multiplier}---yields a local algorithm in certain cases and has been extensively used in decentralized approaches to network training (e.g., \citet{taylor2016admm}). We show that MM can be used as a replacement for PC's inference step, but with a substantial difference: at equilibrium, the Lagrange multipliers converge to exactly the gradient signal of the supervised loss.

This connection is not entirely new. \citet{lecun1988theoretical} observed that the multipliers of the \emph{standard} Lagrangian are the BP adjoints at feasibility. Interestingly, the augmented Lagrangian is essentially the standard Lagrangian plus the PC energy (up to the shared supervised-loss term), suggesting that there is a close relationship between PC and augmented Lagrangian methods.

\paragraph{Contribution: PC-ALM.}
The caveat of MM is that exact BP computation requires convergence to equilibrium in its inner solve, which is not guaranteed for general nonlinear networks \citep{wang2019dladmm}. However, we note that this difficulty is shared by PC. Nevertheless, the PC literature handles this empirically, fixing a finite inference budget and assessing performance directly \citep{innocenti2025mupc, innocenti2026infinitelimits, pinchetti2025benchmarking}, often justified by using good initializations that are close to equilibrium. Motivated by this, we develop a finite-inference variant of MM, called PC-ALM (for Augmented Lagrangian Method), which operates on the same inference iteration budget as PC.

Mechanistically, PC-ALM modifies PC inference with a dual variable (i.e. Lagrange multiplier) per layer that integrates prediction errors and feeds back into the local activations. After a fixed number of inference steps, the weight update acts on a composite signal of the prediction error shifted by the dual. The recurrence introduces stable oscillatory transients on an intermediate time scale between PC's fast inference and slow weight updates.

We first show that in linear PC networks, PC-ALM converges to a solution with exact BP gradients. Our core experimental finding is that PC-ALM closes the PC-BP gap at matched inference budget in nonlinear networks. Following \citet{innocenti2026infinitelimits}, we test PC-ALM at depths up to $128$ on image classification. PC-ALM matches BP across all width-depth regimes, including the deep narrow networks where PC significantly underperforms BP.

We further observe experimentally that PC-ALM exhibits surprising dynamical properties. PC networks are known to concentrate credit signals near the supervised loss \citep{pinchetti2025benchmarking}. In contrast, PC-ALM distributes credit near-uniformly across the network. Finally, we observe that the primal-dual dynamics of PC-ALM accelerate the credit-propagation wavefront in extremely deep networks, where standard PC stalls even at equilibrium.
\begin{figure}[t]
\centering
\includegraphics[width=\linewidth]{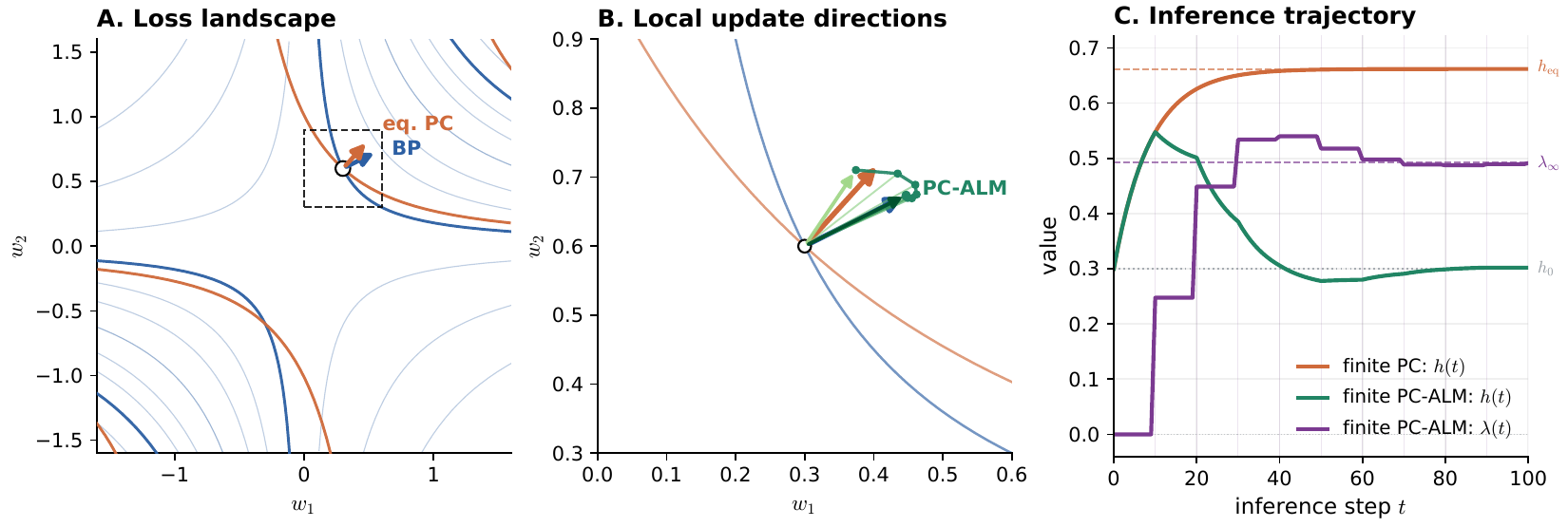}
\caption{PC-ALM credit alignment in a scalar two-layer network ($y = w_2 w_1 x$; $h=w_1 x$; $x{=}y{=}1$). \textbf{(A)} BP loss landscape with BP and equilibrated-PC iso-loss curves. \textbf{(B)} Arrows are negative weight gradients for BP, equilibrated PC, and PC-ALM at different iterations. \textbf{(C)} Inference trace. PC settles $h$ at $h_\mathrm{eq}$; PC-ALM returns $h$ to $h_0 = w_1 x$ while $\lambda$ converges to the BP adjoint $\lambda_\infty = (y - w_2 h_0)\, w_2$. Here, we take $10$ primal steps before each dual step to emphasize the role of the dual as a lagged integrator of error; in Algorithm~\ref{alg:pcalm} we alternate between one primal and one dual step.}
\label{fig:pcalm-local-alignment}
\end{figure}

\section{Related Work}
\label{sec:related}

Our current work draws on an under-explored connection between the biologically plausible deep learning literature and the distributed optimization literature.

\paragraph{Predictive coding.} In biologically plausible deep learning, one core research direction has been to make learning more local.
Predictive coding originated in computational neuroscience as a model of cortical inference \citep{Rao1999PC,friston2009predictive} and has seen ongoing development \citep{millidge2021predictive, salvatori2025survey} as a layer-local approximation to BP using only Hebbian plasticity. \citet{millidge2022framework} established a unified inference-and-learning framework, \citet{salvatori2022pc-graphs} extended PC to arbitrary graph topologies, and \citet{alonso2024optimization} characterized the optimization difficulty of PC inference and proposed accelerated solvers. The current empirical state of the field is set by \citet{pinchetti2025benchmarking}, who find that PC can match BP on small-to-medium image-classification tasks but lags substantially for deep networks. \citet{innocenti2024strict-saddles} show that the equilibrated PC energy contains only strict saddles, and \citet{innocenti2025mupc} introduce a $\mu$P-style parameterization ($\mu$PC) which enables stable training for deep PC networks; follow-up work \citep{innocenti2026infinitelimits} characterizes the PC-BP gap as a function of network width and depth.

\paragraph{Augmented Lagrangian and ADMM for network training.}
A separate community has trained networks by alternating direction method of multipliers (ADMM) or augmented-Lagrangian methods, often for scalability, conditioning, or distributed optimization rather than biological plausibility. \citet{carreira2014mac} introduce the method of auxiliary coordinates with a quadratic-penalty energy and note a connection to augmented-Lagrangian methods. \citet{taylor2016admm} use a single output-layer Bregman/multiplier correction and report full per-constraint multipliers as unstable in their setup. \citet{zeng2021admm} attach multipliers to each layer consistency constraint and prove $O(1/k)$ convergence to a KKT point under smooth-activation assumptions. \citet{wang2024rnn-alm} write Elman RNN training with multipliers on both the affine and the nonlinear constraints, solve each block in closed form by exploiting ReLU's piecewise-linear structure, and prove convergence to a KKT point; applied to a feedforward MLP, this is analogous to PC-ALM with separate affine and activation constraints/multipliers. \citet{evens2021optimal} formulate feedforward network training as a nonlinear optimal-control problem and apply ALM with Gauss--Newton and forward dynamic programming. \citet{frerix2018proxprop} introduce ProxProp, a penalty-only formulation of network training in which a forward pass followed by a reverse-order sweep of gradient steps on a quadratic-penalty energy reproduces one BP step exactly. ProxProp carries no multipliers and runs no outer dual iteration. Lifted-training methods \citep{askari2018lifted, li2018lpom, gu2020fenchel, zach2019contrastive, wang2023bregman, wang2025unified} replace hard layer constraints with activation-specific penalties or relaxations (proximal, Fenchel/Lagrange-relaxation, contrastive, Bregman), but generally do not carry persistent per-sample layer multipliers in the inference state. \citet{zhou2026pisa} run stochastic ADMM over data/client-partitioned local parameter copies with consensus multipliers, splitting on a different axis than the layer-local splitting used here. Where these methods carry multipliers, they typically update them after ADMM/ALM block rounds or local subproblem updates.

\section{Background}
\label{sec:setup}

\begin{figure}[t]
    \centering
    \includegraphics[width=\linewidth]{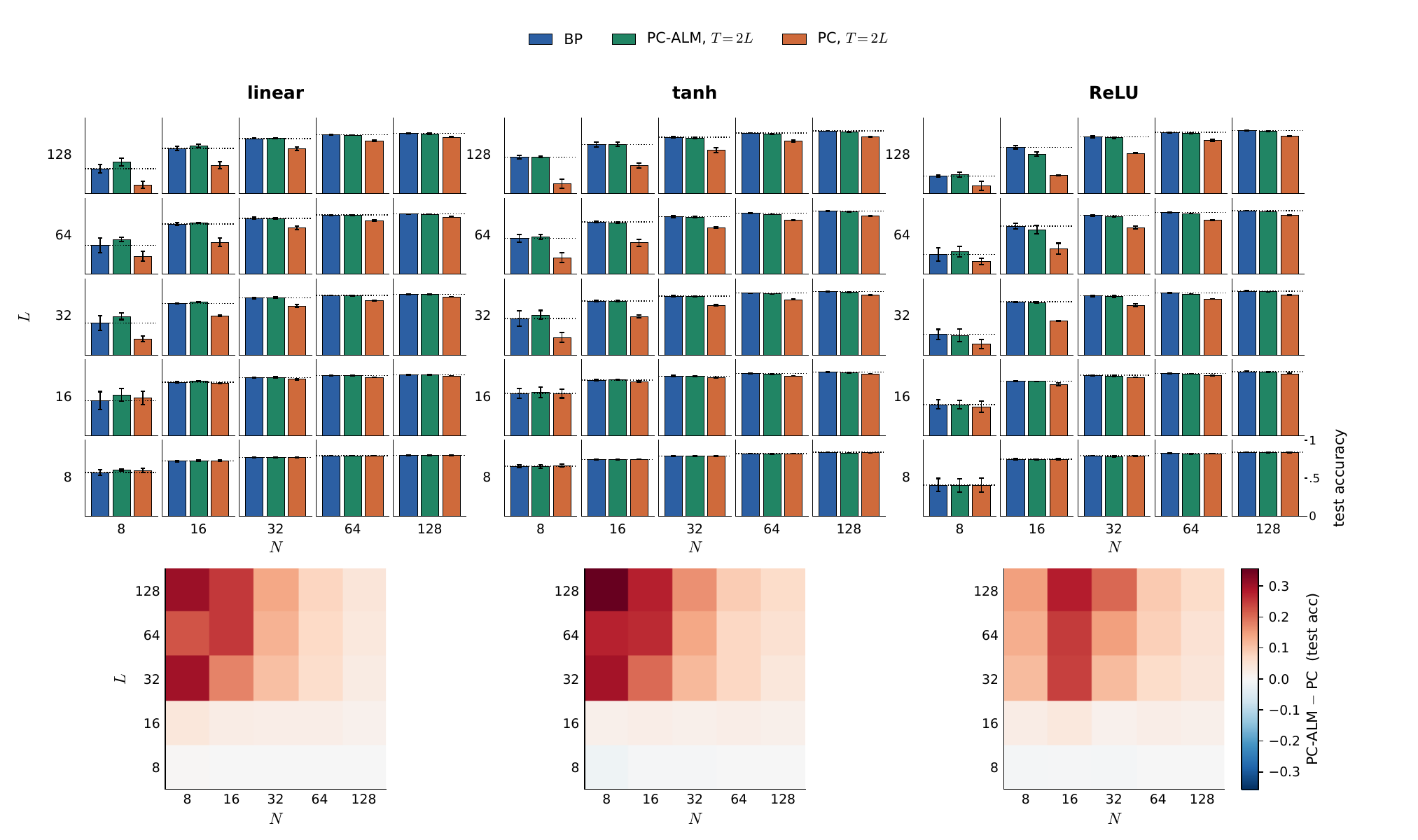}
    \caption{Width-depth results on Fashion-MNIST. PC-ALM matches BP at a budget of $T=2L$ across the full range of $N$, $L$, and activations. PC-ALM outperforms PC in deep narrow networks.}
    \label{fig:nl-headline-fashion}
\end{figure}

We start by reviewing the constrained optimization formalism for training a deep feedforward network.

\paragraph{Constrained network.}
Consider a feedforward network of depth $L$ with weights $\theta = \{W_i\}_{i=1}^{L}$ and elementwise nonlinearity $\sigma$ (e.g., identity, tanh, ReLU). Given input $x \in \reals^{n_0}$ and target $y \in \reals^{n_L}$, the forward pass computes $h_i = \sigma(W_i h_{i-1})$ from $h_0 = x$ for $i = 1, \ldots, L-1$, with linear readout $W_L h_{L-1}$. We can write the optimization problem as
\begin{equation}
\label{eq:constrained}
\begin{aligned}
\min_{\theta,\, h}\quad & \tfrac{1}{2}\bigl\|y - W_L h_{L-1}\bigr\|^2 \\
\text{s.t.}\quad & h_i = \sigma(W_i h_{i-1}), \quad i = 1, \ldots, L-1,
\end{aligned}
\end{equation}
where the activations $h = \{h_i\}_{i=1}^{L-1}$ are optimization variables alongside the weights $\theta$.

\paragraph{Predictive coding as quadratic penalty.}
The penalty relaxation of \eqref{eq:constrained} replaces each layer constraint with a squared-error term, giving the PC energy \citep{whittington2017approximation, millidge2022framework, salvatori2022pc-graphs}:
\begin{equation}
\label{eq:pc-energy}
\F_{\text{PC}}(h, \theta)
\;=\; \tfrac{1}{2}\bigl\|y - W_L h_{L-1}\bigr\|^2
\;+\; \frac{\rho}{2} \sum_{i=1}^{L-1} \bigl\|h_i - \sigma(W_i h_{i-1})\bigr\|^2,
\end{equation}
for a penalty strength $\rho$.

\paragraph{Equilibrated PC.}
For each sample $(x, y)$, PC alternates between two steps:
\begin{subequations}
\label{eq:pc-train}
\begin{align}
\text{inference:}\quad & h^\star \;=\; \arg\min_{h}\, \F_{\text{PC}}(h, \theta), \label{eq:pc-inference}\\
\text{learning:}\quad & \theta \;\gets\; \theta - \eta_\theta\, \nabla_\theta \F_{\text{PC}}(h^\star, \theta).\label{eq:pc-learning}
\end{align}
\end{subequations}
The inference step replaces a standard deep network's forward and backward passes. The alternating steps are meant to reflect fast (inference) and slow (learning) dynamics of the underlying system.

\paragraph{Finite-inference PC.}
As written, the argmin in \eqref{eq:pc-inference} globally depends on all layer activations. However, gradient descent on this argmin decouples into layer-local updates:
\begin{equation}
\label{eq:pc-finite-inf}
h_i(t+1) \;=\; h_i(t) \;-\; \eta_h\, \nabla_{h_i}\F_{\text{PC}}(h(t), \theta),
\quad i = 1, \ldots, L-1, \quad t = 0, \ldots, T-1.
\end{equation}
Each $h_i$ appears in only two terms of $\F_{\text{PC}}$, so $\nabla_{h_i}\F_{\text{PC}}$ depends only on $h_{i-1}(t), h_i(t), h_{i+1}(t)$ and the adjacent weights $W_i, W_{i+1}$.
Activations are typically initialized from their forward-pass values. In general, the number of inference steps required to reach convergence depends on the architecture \citep{alonso2024optimization}; in practice, the iteration is typically run either to numerical convergence \citep{innocenti2024strict-saddles} or for a fixed $T$-step budget \citep{innocenti2025mupc, pinchetti2025benchmarking}. We refer to this as finite-inference PC to distinguish it from the idealized equilibrated case.

\paragraph{Augmented Lagrangian.}
Attaching multipliers $\lambda = \{\lambda_i\}_{i=1}^{L-1}$ to the constraints of \eqref{eq:constrained} and keeping the quadratic penalty gives the augmented Lagrangian \citep{hestenes1969multiplier, powell1969method, bertsekas1976multiplier}:
\begin{equation}
\label{eq:aug-lag}
\Lag_\rho(h, \theta, \lambda)
\;=\; \tfrac{1}{2}\bigl\|y - W_L h_{L-1}\bigr\|^2
\;+\; \sum_{i=1}^{L-1} \lambda_i^\top \bigl(h_i - \sigma(W_i h_{i-1})\bigr)
\;+\; \frac{\rho}{2} \sum_{i=1}^{L-1} \bigl\|h_i - \sigma(W_i h_{i-1})\bigr\|^2.
\end{equation}
Pinning $\lambda \equiv 0$ recovers the PC energy: $\F_{\text{PC}}(h, \theta) = \Lag_\rho(h, \theta, 0)$. Setting $\rho = 0$ recovers the standard Lagrangian of \eqref{eq:constrained}.

\paragraph{Method of multipliers.}
Returning to the constrained optimization problem in \eqref{eq:constrained}, we can solve for the activations $h$ by performing dual ascent on the AL (the method of multipliers \citep{hestenes1969multiplier, powell1969method, bertsekas1976multiplier}). 
For each sample, initialize $\lambda_i \equiv 0$ and alternate between primal and dual updates
\begin{subequations}
\label{eq:mom}
\begin{align}
\text{primal:}\quad & h \;\gets\; \arg\min_{h}\, \Lag_\rho(h, \theta, \lambda), \label{eq:mom-primal}\\
\text{dual:}\quad & \lambda_i \;\gets\; \lambda_i + \rho\, \bigl(h_i - \sigma(W_i h_{i-1})\bigr), \quad i = 1, \ldots, L-1, \label{eq:mom-dual}
\end{align}
\end{subequations}
for $k = 1, \ldots, K$ outer rounds or until convergence criteria are met. The method of multipliers here is being used to solve the ``inference'' step but on the \emph{constrained} problem \eqref{eq:constrained}, not PC's quadratic relaxation \eqref{eq:pc-energy}. Therefore, at convergence for settled states $(h^\star, \lambda^\star)$, the activations $h^\star$ are simply what one would get from a forward pass. However, the converged duals $\lambda^\star$ represent meaningful local error signals, yielding a weight update:

\begin{equation}
\label{eq:mom-learning}
\text{learning:}\quad \theta \;\gets\; \theta - \eta_\theta\, \nabla_\theta \Lag_\rho(h^\star, \theta, \lambda^\star),
\end{equation}

This corresponds exactly to a BP update, as expanded below.

\paragraph{Multipliers as backpropagation gradients.}
\citet{lecun1988theoretical} observed that at any KKT point of \eqref{eq:constrained}, the multipliers of the standard Lagrangian (\eqref{eq:aug-lag} with $\rho = 0$) coincide with the backpropagation adjoints. With $h$ pinned at the forward-pass values, back-substitution from the output recovers $\lambda$ in closed form, with $\lambda$ at the KKT point equal to the BP adjoint at every layer (Appendix~\ref{app:kkt-proof}).

\paragraph{Local algorithm via the augmented Lagrangian.}
This identification suggests an iterative route to the BP adjoints by dual ascent on the standard Lagrangian, which decomposes layerwise (a special case of dual decomposition). On a deep network's nonconvex Lagrangian, however, the iteration diverges with $\lambda$ unbounded \citep{nocedal2006numerical}. The augmented Lagrangian \eqref{eq:aug-lag} stabilizes the dual update without altering the KKT solution, but at the cost of breaking the layerwise decomposition; the primal argmin \eqref{eq:mom-primal} now couples all activations globally. The optimization literature typically resolves this with ADMM \citep{boyd2011distributed}. We point out that solving the primal argmin with gradient descent, as in PC, also preserves layer-locality, since $\nabla_h \Lag_\rho$ at layer $i$ depends only on adjacent layers' activations and multipliers. While this does not remove the difficulty of reaching a global minimum, the PC-like primal updates paired with dual updates suggest a potential route for layer-local learning with gradients more aligned to BP than PC.
\section{Method: PC-ALM}
\label{sec:alg}

\begin{figure}[t]
    \centering
    \includegraphics[width=\linewidth]{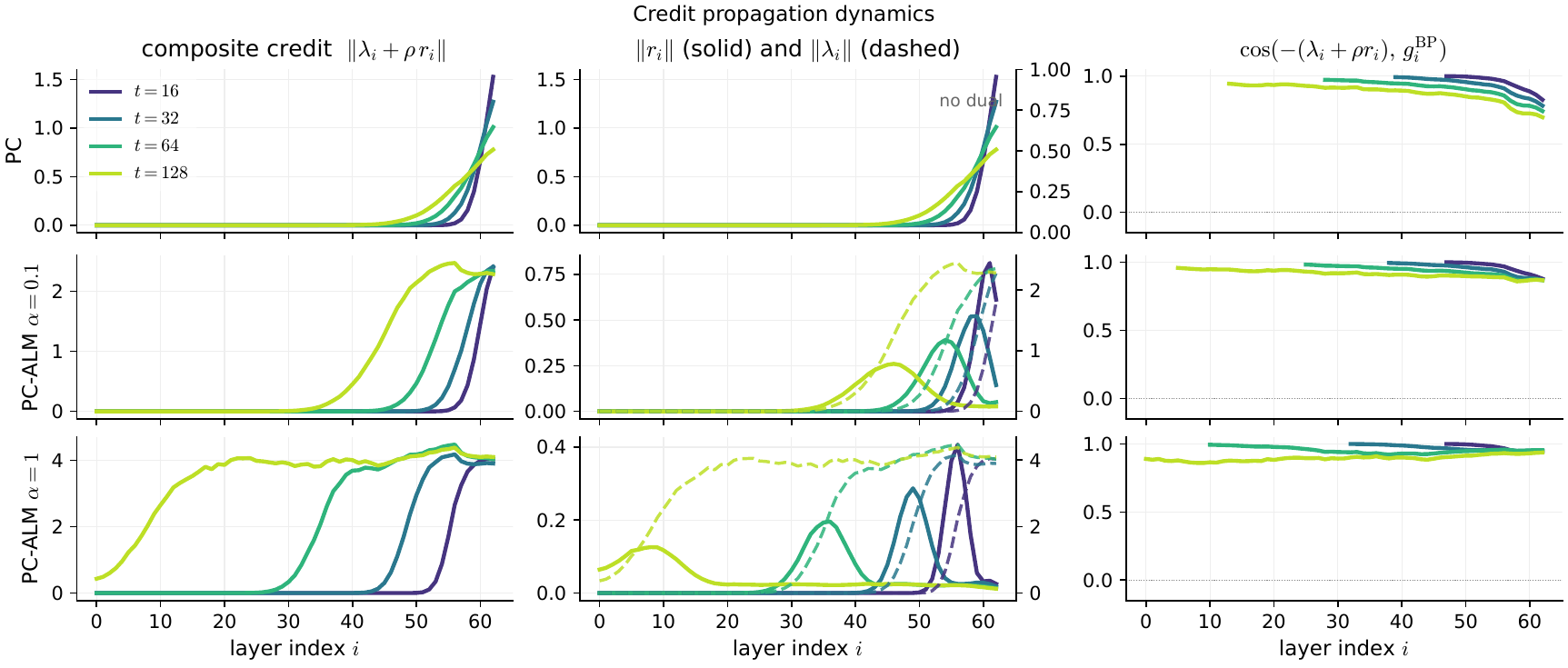}
    \caption{Diffusive versus ballistic credit propagation. ReLU residual MLP with $N = 32$, $L = 64$ on a Fashion-MNIST sample at initialization. The middle plot shows that the prediction error $r_i$ (left axis scale) leads the dual $\lambda_i$ (right axis scale) across the network. PC-ALM enables faster and more even credit propagation across the network compared to PC.}
    \label{fig:init-credit-relu}
\end{figure}
    
The method of multipliers \eqref{eq:mom} wraps an outer dual loop around the inner argmin, which must be solved at every outer round. Reaching this inner stationarity is the central bottleneck of PC inference in practice \citep{pinchetti2025benchmarking}, and is inherited by MM. We therefore develop PC-ALM, a finite-inference variant of MM that uses the same inference budget $T$ as finite-inference PC. PC-ALM replaces the argmin in \eqref{eq:mom-primal} with one gradient descent step on $\Lag_\rho$, applies the dual update immediately after, and loops $T-1$ times before a final primal step. The full procedure is Algorithm~\ref{alg:pcalm}.

\begin{algorithm}[h]
\caption{PC-ALM training}
\label{alg:pcalm}
\begin{algorithmic}[1]
\State \textbf{given} dataset $D$, penalty $\rho$, inference steps $T$, step sizes $\eta_h, \alpha, \eta_\theta$
\For{each batch $B \subset D$}
  \For{each sample $(x, y) \in B$ in parallel}
    \State $h_0 \gets x$;\quad $h_i \gets \sigma\bigl(W_i h_{i-1}\bigr)$ for $i = 1, \ldots, L-1$ \Comment{forward init}
    \State $\lambda_i \gets 0$ for $i = 1, \ldots, L-1$
    \For{$t = 1, \ldots, T-1$}
      \State $h_i \gets h_i \;-\; \eta_h\, \nabla_{h_i}\Lag_\rho\bigl(h, \theta, \lambda\bigr)$ \quad for $i = 1, \ldots, L-1$ \Comment{primal step}
      \State $\lambda_i \gets \lambda_i \;+\; \alpha\,\bigl(h_i - \sigma(W_i h_{i-1})\bigr)$ \quad for $i = 1, \ldots, L-1$ \Comment{dual step}
    \EndFor
    \State $h_i \gets h_i \;-\; \eta_h\, \nabla_{h_i}\Lag_\rho\bigl(h, \theta, \lambda\bigr)$ \quad for $i = 1, \ldots, L-1$ \Comment{final primal step}
  \EndFor
  \State $W_i \gets W_i \;-\; \eta_\theta\, \tfrac{1}{|B|} \sum_{(x,y) \in B} \nabla_{W_i}\Lag_\rho\bigl(h, \theta, \lambda\bigr)$ for $i = 1, \ldots, L$ \Comment{learning step}
\EndFor
\end{algorithmic}
\end{algorithm}

\paragraph{Inference gradients.}
To analyze the difference between $\F_{\text{PC}}$ and $\Lag_\rho$, we rewrite $\Lag_\rho$ in a more interpretable form. Completing the square on each cross term in \eqref{eq:aug-lag},
\begin{equation}
\label{eq:complete-square}
\lambda_i^\top r_i \;+\; \tfrac{\rho}{2}\,\|r_i\|^2
\;=\; \tfrac{\rho}{2}\,\Bigl\|h_i - \bigl(\sigma(W_i h_{i-1}) - \lambda_i/\rho\bigr)\Bigr\|^2 \;-\; \tfrac{1}{2\rho}\,\|\lambda_i\|^2,
\end{equation}
where $r_i := h_i - \sigma(W_i h_{i-1})$ denotes the residual (or prediction error) at layer $i$. Substituting \eqref{eq:complete-square} back into \eqref{eq:aug-lag},
\begin{equation}
\label{eq:lag-as-pc-shifted}
\Lag_\rho(h, \theta, \lambda)
\;=\; \tfrac{1}{2}\,\bigl\|y - W_L h_{L-1}\bigr\|^2
\;+\; \frac{\rho}{2}\sum_{i=1}^{L-1}\Bigl\|h_i - \bigl(\sigma(W_i h_{i-1}) - \lambda_i/\rho\bigr)\Bigr\|^2
\;-\; \frac{1}{2\rho}\sum_{i=1}^{L-1}\|\lambda_i\|^2.
\end{equation}
The last term has no $h$-dependence and drops out of $\nabla_h \Lag_\rho$. The first two terms are the PC free energy \eqref{eq:pc-energy} with each prediction target shifted from $\sigma(W_i h_{i-1})$ to $\sigma(W_i h_{i-1}) - \lambda_i/\rho$. Differentiating with respect to $h_i$ for $1 \le i \le L-2$,
\begin{align}
\label{eq:gradh-pc}
\nabla_{h_i}\F_{\text{PC}}
&\;=\; \rho\, r_i \;-\; W_{i+1}^\top\, \mathrm{diag}\bigl(\sigma'(W_{i+1} h_i)\bigr)\,\rho\, r_{i+1}, \\
\label{eq:gradh-pcalm}
\nabla_{h_i}\Lag_\rho
&\;=\; (\rho\, r_i + \lambda_i) \;-\; W_{i+1}^\top\, \mathrm{diag}\bigl(\sigma'(W_{i+1} h_i)\bigr)\,(\rho\, r_{i+1} + \lambda_{i+1}).
\end{align}
In PC-ALM, each prediction target is shifted by $-\lambda_i/\rho$ relative to PC.

\paragraph{Dual step.}
After each activity step, the multiplier accumulates the residual:
\begin{equation}
\label{eq:dual-pcalm}
\lambda_i \;\gets\; \lambda_i \;+\; \alpha\, r_i, \qquad i = 1, \ldots, L-1.
\end{equation}
The classical MM update \eqref{eq:mom-dual} uses rate $\rho$ with the activity solved to stationarity. PC-ALM takes a single inner gradient step instead, so $r_i$ carries unsettled primal gradient on top of constraint infeasibility; we use a separate hyperparameter $\alpha$ for the dual rate. $\alpha = 0$ recovers PC; $\alpha = \rho$ with the inner argmin solved recovers MM.

\paragraph{Weight update.}
After $T-1$ primal-dual cycles and a final primal step, weights are updated:
\begin{equation}
\label{eq:weight-pcalm}
\theta \;\gets\; \theta - \eta_\theta\, \nabla_\theta \Lag_\rho(h, \theta, \lambda).
\end{equation}
The composite signal $\lambda_i + \rho r_i$ at the weight gradient has $\lambda_i$ from the integrated dual history and $\rho r_i$ from the residual at the final $h$. By \eqref{eq:lag-as-pc-shifted}, this is the PC weight update with each prediction target shifted by $-\lambda_i/\rho$.

\subsection{Interpretation}
\label{sec:pcalm-interpretation}

\begin{figure}[t]
    \centering
    \includegraphics[width=\linewidth]{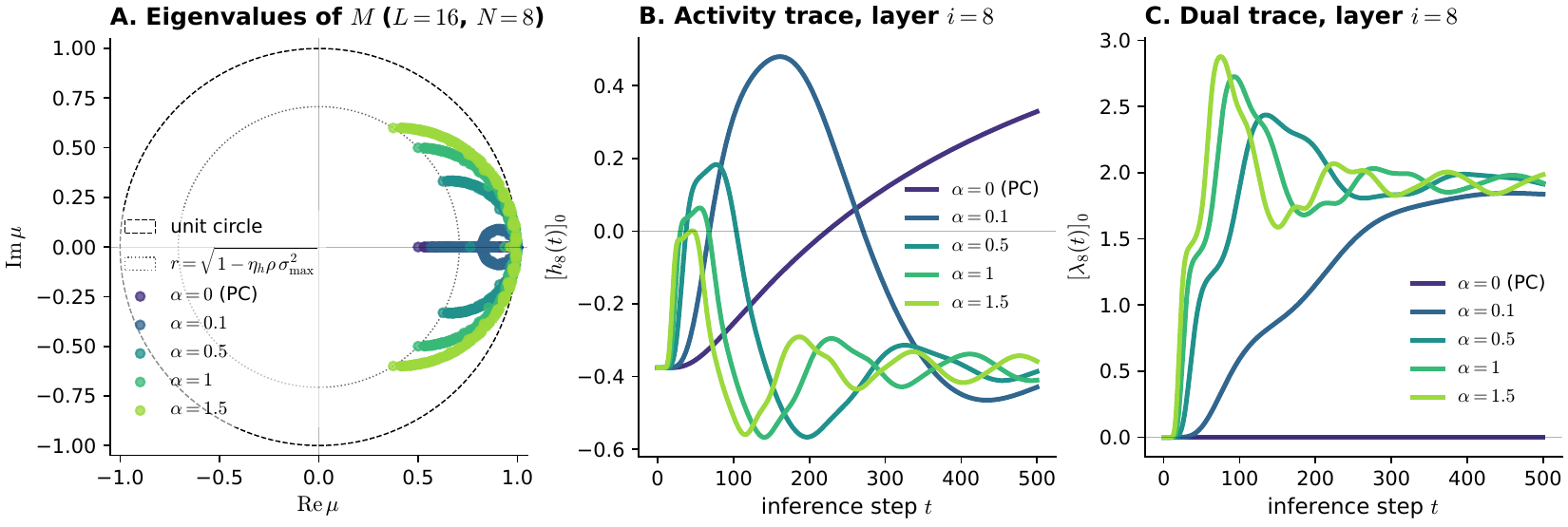}
    \caption{Linear PC-ALM inference dynamics. \textbf{(A)} Eigenvalues of the block iteration matrix $M$. \textbf{(B, C)} Primal and dual neuron activity traces.}
    \label{fig:linear-dynamics-spectrum}
\end{figure}

PC-ALM augments each layer with a per-layer Lagrange multiplier $\lambda_i$ of the same shape as $h_i$. On a single sample, the activity loop $t = 0, \ldots, T-1$ initializes $h$ at the forward-pass values and $\lambda \equiv 0$. The first activity step is therefore a standard PC step; with $\lambda \equiv 0$, \eqref{eq:gradh-pcalm} reduces to \eqref{eq:gradh-pc}. The first dual step then injects the layer-$i$ residual $r_i$ into $\lambda_i$. From the second activity step onward, the activity gradient at layer $i$ sees the augmented signal $\rho\, r_i + \lambda_i$ in place of $\rho\, r_i$ alone---a lagged correction entering via the integrated multiplier from prior steps.

At convergence, the activation $h_i$ returns to its forward-pass value (the constraint $h_i = \sigma(W_i h_{i-1})$ is restored, $r_i \to 0$),
while $\lambda_i$ has integrated to the BP adjoint at that layer (Appendix~\ref{app:kkt-proof}). Across the inference loop, $\lambda_i$ accumulates whatever pressure the layer-$i$ constraint exerts on adjacent activations, and at the fixed point this accumulated pressure is exactly the chain-rule gradient.

Figure~\ref{fig:pcalm-local-alignment} illustrates this on a scalar two-layer example. At a fixed weight where equilibrated PC and BP disagree, the dual integrates to the BP adjoint $\lambda_\infty = (y - w_2 h_0)\, w_2$ while the activation returns to its forward-pass value $h_0 = w_1 x$. The PC-ALM weight update therefore rotates onto BP.
\section{Results}
\label{sec:results}

We now present a series of theoretical and empirical results for PC-ALM. 

\subsection{Linear PC-ALM computes exact BP gradients at equilibrium}
\label{sec:results-linear-convergence}

PC-ALM is an inexact formulation of MM since each primal step is a single gradient step instead of the full argmin. Nevertheless, we show that in a linear PC network, under stability conditions, PC-ALM converges precisely to the KKT point of the constrained optimization problem (Appendix~\ref{app:linear-pcalm-dynamics}, Proposition~\ref{prop:linear-convergence}).
Thus, in linear PCNs, PC-ALM is a layer-local dynamical system that distributes exact BP gradient signals across the network.

\subsection{Stability and oscillatory modes}
\label{sec:results-stability-oscillation}

We also provide stability conditions for PC-ALM. The linear PC-ALM dynamics are governed by an iteration matrix $M$ (Equation~\eqref{eq:linear-pcalm-block}) and stability requires that its spectral radius be less than $1$ (i.e., all eigenvalues inside the open unit disc). Dropping the supervised-loss block ($B = 0$), this reduces to the constraint-only modewise condition
\begin{equation}
\label{eq:stability-pcalm}
\underbrace{\eta_h\, \sigma_i^2\, 2\rho < 4}_{\text{PC}}
\qquad\qquad
\underbrace{\eta_h\, \sigma_i^2\, (2\rho + \alpha) < 4}_{\text{PC-ALM}}
\end{equation}
at every singular mode $\sigma_i$ of the constraint operator $A$ (Appendix~\ref{app:linear-pcalm-dynamics}, Corollary~\ref{cor:modewise-jury}). For $\alpha = 0$ we recover the stability condition for PC. The mean-field parameterizations we use (Appendix~\ref{app:architectures}) stabilize $\sigma_{\max}$ over a wide range of network depths and widths. Setting $\rho = 1$, we roughly obtain $\eta_{\max} \approx 0.5$ and $\alpha_{\max} \approx 2$.

The oscillatory modes of PC-ALM are shown in Figure~\ref{fig:linear-dynamics-spectrum}. Notably, in standard PC, all activation dynamics are gradient flows and do not produce oscillatory transients (e.g., eigenvalues on the real line in Figure~\ref{fig:linear-dynamics-spectrum} for $\alpha = 0$). The complex eigenvalues of PC-ALM lie within an annulus with inner radius $\sqrt{1 - \eta_h\, \rho\, \sigma_{\max}(A)^2}$ and outer radius $1$. The dual rate $\alpha$ slides them along this circle, setting oscillation frequency.

Importantly, convergence to the KKT solution and the BP gradients it provides does not depend on the specific stable choice of $\eta_h$ and $\alpha$; these values only shape the transient dynamics on the path to equilibrium.

\subsection{PC-ALM closes the PC-BP gap in deep narrow networks}
\label{sec:results-deep-networks}

Beyond the linear analysis, we test PC-ALM experimentally in nonlinear PCNs.
We sweep $(N, L) \in \{8, 16, 32, 64, 128\}^2$ in residual MLPs under the mean-field parameterization of \citet{innocenti2026infinitelimits} (details in Appendix~\ref{app:experimental-details}), comparing PC, PC-ALM, and BP at matched inference budget on Fashion-MNIST and MNIST with identity, tanh, and ReLU activations. PC performance degrades along the depth axis, reproducing the deep-narrow failure mode of \citet{innocenti2026infinitelimits}. PC-ALM recovers a fraction of the BP gap at a budget of $T=L$ (Figure~\ref{fig:headline-trace}) while closing the gap entirely at a budget of $T=2L$ across the full range of $N$, $L$, and activations (Figure~\ref{fig:nl-headline-fashion}).
Further experimental details are provided in Appendix~\ref{app:experimental-details}.

\subsection{PC-ALM is robust to reparameterization}
\label{sec:results-reparameterization}

We investigate whether the performance of PC-ALM is due to the choice of mean-field parameterization, which stabilizes both PC and BP. Holding $(N, L) = (32, 64)$, we sweep two parameterization probes around the mean-field endpoint of \citet{innocenti2026infinitelimits}: a standard parameterization to mean-field interpolation $\lambda_{\text{sp}} \in \{0, 0.25, 0.5, 0.75, 1\}$ at $\gamma_0 = 1$, and a lazy-to-rich axis $\gamma_0 \in \{0.1, 0.5, 1, 2, 3, 4\}$ at $\lambda_{\text{sp}} = 1$. Across all values, PC-ALM matches the performance of BP (Figs.~\ref{fig:param-sweep-fashion}, \ref{fig:param-sweep-mnist}), showing that it remains stable and robust even across unfavorable parameterization regimes.

\subsection{Ballistic credit propagation wavefront}
\label{sec:results-faster-credit-propagation}

In addition to the experimental verification above, we note further surprising dynamical properties of PC-ALM.
PCNs are known to exhibit signal decay across layers \citep{pinchetti2025benchmarking}.
We reproduce this result as shown in Figure~\ref{fig:init-credit-relu} (top). Surprisingly, the primal-dual dynamics of PC-ALM accelerate the credit propagation wavefront across the network until the composite signal $\lambda_i+\rho r_i$ distributes evenly. The prediction-error component $r_i$ propagates ahead of the lagged dual $\lambda_i$ (Figure~\ref{fig:init-credit-relu}, middle column). Appendix~\ref{app:credit-wavefront} gives the linear wavefront calculation. We call this ballistic credit propagation in contrast to PC's diffusive propagation.

\subsection{PC-ALM's BP alignment exhibits an inflection at a predictable inference step}
\label{sec:results-credit-alignment-inflection}

We observe that PC-ALM provides a modest-but-measurable boost to PC at low inference budgets, but that this boost ``snaps'' to stronger BP alignment at inference steps of around twice the network depth when using default hyperparameter $\alpha = 1$. This phenomenon is illustrated in Figure~\ref{fig:alpha-t-two-panel} (panel A), where the BP alignment yields an inflection point whose location is controlled by $\alpha$. Higher $\alpha$ pulls the inflection to an earlier inference step. We found this phenomenon to be present for all depths and widths. Further, the location of the inflection can be predicted across a wide range of network depths and widths, namely, at $t \approx L/\sqrt{\alpha\,\eta_h}$ (Figure~\ref{fig:alpha-t-two-panel}, panel B), with arguments developed in Appendix~\ref{app:credit-wavefront}.

\begin{figure}[h]
    \centering
    \includegraphics[width=0.8\linewidth]{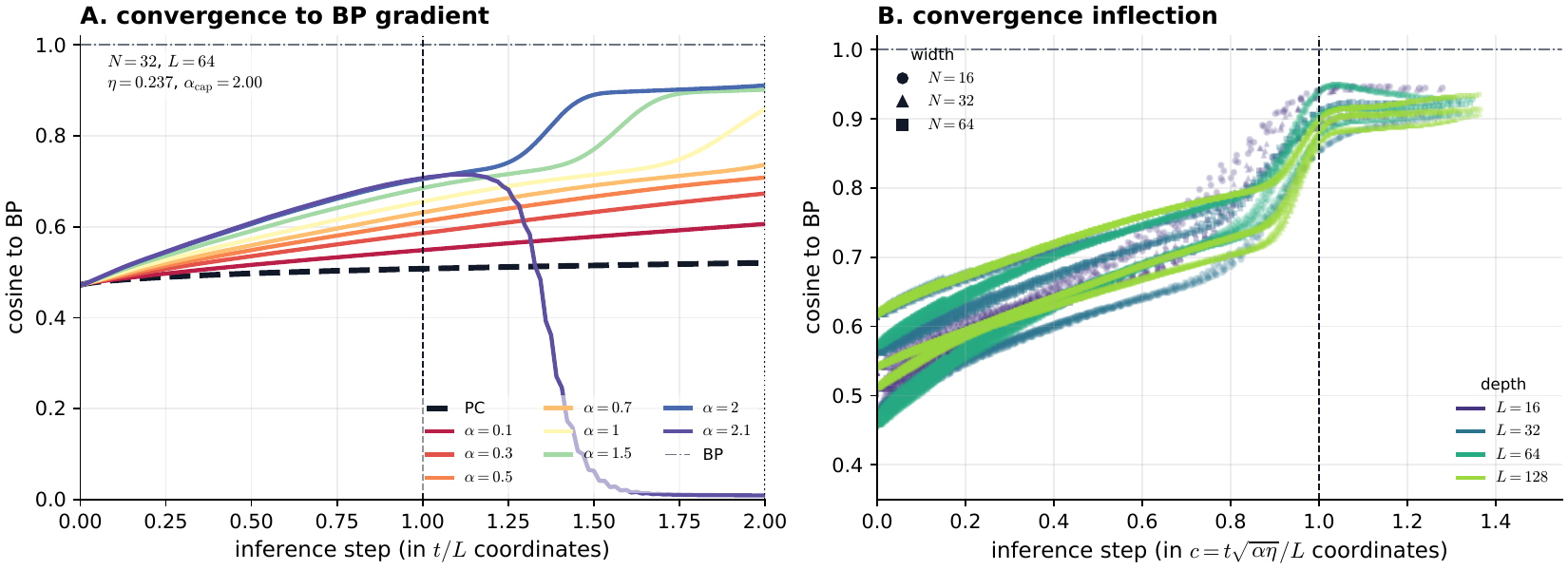}
    \caption{Cosine between the PC-ALM weight gradient and the BP weight gradient at an initialized network. \textbf{(A)} Alignment as a function of iteration, normalized to $L$. \textbf{(B)} Alignment as a function of iteration $t$, in units of $c = t\sqrt{\alpha\, \eta_h}/L$ across different depths and widths.}
    \label{fig:alpha-t-two-panel}
\end{figure}
    
\subsection{PC-ALM is inference efficient}
\label{sec:results-inference-efficiency}

From a pragmatic standpoint, PC-ALM doubles PC's activation memory cost, albeit with modest additional compute. Under mean-field scaling, \citet{innocenti2026infinitelimits} show that increasing width aligns PC's gradient closer to BP, giving two routes to BP alignment: widening PC, or running PC-ALM at fixed width. To achieve a $0.9$ cosine similarity to BP, PC-ALM is more efficient by an order of magnitude. Results are in Figure~\ref{fig:budget-to-bp}, with the argument developed in Appendix~\ref{app:complexity-comparison}.

\section{Discussion}
\label{sec:discussion}

We introduce PC-ALM, a generalization of finite-inference PC that arises as finite-inference dual ascent on the augmented Lagrangian of a feedforward neural network. PC and PC-ALM address two different optimization problems---the quadratic relaxation in PC's case, and the original constrained problem in PC-ALM's. They share the same inference budget aside from PC-ALM's dual update, but exhibit distinct dynamical properties. We verified that PC-ALM is stable for linear PCNs and provides exact BP-aligned credit signals across the network through layer-local accumulation of prediction errors. In nonlinear networks, we further found that PC-ALM is robust and matches BP performance, notably in parameter regimes where PC and BP diverge. We expand on current limitations of our work in Appendix~\ref{app:limitations}.

\subsection{Connection to other Lagrangian-based methods}

PC-ALM is one of several training methods that begin by writing
feedforward training as a constrained problem $\min
\mathtt{Loss}(h_L,y)$ subject to $h_\ell = \sigma(W_\ell h_{\ell-1})$, and attaching Lagrange
multipliers and/or quadratic penalties to the layer constraints.
Methods within this family differ in (i) which terms of the augmented
Lagrangian they retain, (ii) how they solve the inner activity
subproblem, and (iii) how the multipliers, if any, are recovered.
Standard PC relies only on the penalty.

Like PC, the BlockProp algorithm \citep{gotmare2018decoupling} works with a quadratic penalty for a different motivation---model-parallel training across blocks of layers. The authors use an L2 gluing penalty and note that the Taylor-style Lagrange multiplier was unnecessary and less stable in their experiments. They further show
that BlockProp with single-layer blocks and a quadratic gluing energy
is equivalent to Equilibrium Propagation (EP) of
\citet{scellier2017equilibrium}. EP itself takes a different route around the dual-ascent
instability. Rather than iterating the multipliers, it recovers them
implicitly by \emph{nudging}---a finite-difference method on the
energy gradient between a free fixed point ($\beta=0$, in EP notation)
and a weakly-clamped fixed point ($\beta \to 0^+$). This approach
yields contrastive Hebbian updates that estimate $\lambda^\star$
without ever carrying it as state. EP therefore needs two fixed-point phases; PC-ALM runs primal and dual updates in a single loop. Dual Propagation \citep{hoier2023dual} extends the EP nudging idea to a single-phase dyadic-neuron scheme, where the per-neuron difference between nudged states plays a role structurally analogous to PC-ALM's multiplier.
The recent work of \citet{scurria2026physicaltheorybackpropagationexact} resembles PC-ALM but via the standard Lagrangian; the similar primal-dual dynamics of their dyadic BP require, however, certain assumptions of nilpotency in the constraint operator equations.
We expand on further related work in Appendix~\ref{app:related-work}.

\bibliographystyle{plainnat}
\bibliography{references/references}

\begin{thebibliography}{48}
\providecommand{\natexlab}[1]{#1}
\providecommand{\url}[1]{\texttt{#1}}
\expandafter\ifx\csname urlstyle\endcsname\relax
  \providecommand{\doi}[1]{doi: #1}\else
  \providecommand{\doi}{doi: \begingroup \urlstyle{rm}\Url}\fi

\bibitem[Alonso et~al.(2024)Alonso, Krichmar, and
  Neftci]{alonso2024optimization}
Nicolas Alonso, Jeffrey~L. Krichmar, and Emre Neftci.
\newblock Understanding and improving optimization in predictive coding
  networks.
\newblock In \emph{Proceedings of the AAAI Conference on Artificial
  Intelligence}, volume~38, 2024.
\newblock \doi{10.1609/aaai.v38i10.28954}.

\bibitem[Askari et~al.(2018)Askari, Negiar, Sambharya, and
  El~Ghaoui]{askari2018lifted}
Armin Askari, Geoffrey Negiar, Rajiv Sambharya, and Laurent El~Ghaoui.
\newblock Lifted neural networks.
\newblock \emph{arXiv preprint arXiv:1805.01532}, 2018.

\bibitem[Bengio(2014)]{bengio2014autoencodersprovidecreditassignment}
Yoshua Bengio.
\newblock How auto-encoders could provide credit assignment in deep networks
  via target propagation, 2014.
\newblock URL \url{https://arxiv.org/abs/1407.7906}.

\bibitem[Bertsekas(1976)]{bertsekas1976multiplier}
Dimitri~P. Bertsekas.
\newblock Multiplier methods: A survey.
\newblock \emph{Automatica}, 12\penalty0 (2):\penalty0 133--145, 1976.

\bibitem[Bosca and Ghrist(2026)]{bosca2026local}
Alessandro Bosca and Robert Ghrist.
\newblock Neural networks as local-to-global computations.
\newblock \emph{arXiv preprint arXiv:2603.14831}, 2026.

\bibitem[Boyd et~al.(2011)Boyd, Parikh, Chu, Peleato, and
  Eckstein]{boyd2011distributed}
Stephen Boyd, Neal Parikh, Eric Chu, Borja Peleato, and Jonathan Eckstein.
\newblock Distributed optimization and statistical learning via the alternating
  direction method of multipliers.
\newblock \emph{Foundations and Trends in Machine Learning}, 3\penalty0
  (1):\penalty0 1--122, 2011.
\newblock \doi{10.1561/2200000016}.

\bibitem[Bredenberg et~al.(2023)Bredenberg, Williams, Savin, Richards, and
  Lajoie]{bredenberg2023formalizing}
Colin Bredenberg, Ezekiel Williams, Cristina Savin, Blake Richards, and
  Guillaume Lajoie.
\newblock Formalizing locality for normative synaptic plasticity models.
\newblock In A.~Oh, T.~Naumann, A.~Globerson, K.~Saenko, M.~Hardt, and
  S.~Levine, editors, \emph{Advances in Neural Information Processing Systems},
  volume~36, pages 5653--5684. Curran Associates, Inc., 2023.
\newblock URL
  \url{https://proceedings.neurips.cc/paper_files/paper/2023/file/120339238f293d4ae53a7167403abc4b-Paper-Conference.pdf}.

\bibitem[Carreira-Perpi{\~n}{\'a}n and Wang(2014)]{carreira2014mac}
Miguel~{\'A}. Carreira-Perpi{\~n}{\'a}n and Weiran Wang.
\newblock Distributed optimization of deeply nested systems.
\newblock In \emph{Proceedings of the 17th International Conference on
  Artificial Intelligence and Statistics, PMLR 33}, 2014.

\bibitem[Evens et~al.(2021)Evens, Latafat, Themelis, Suykens, and
  Patrinos]{evens2021optimal}
Brecht Evens, Puya Latafat, Andreas Themelis, Johan Suykens, and Panagiotis
  Patrinos.
\newblock Neural network training as an optimal control problem : — an
  augmented lagrangian approach —.
\newblock In \emph{2021 60th IEEE Conference on Decision and Control (CDC)},
  page 5136–5143. IEEE, December 2021.
\newblock \doi{10.1109/cdc45484.2021.9682842}.
\newblock URL \url{http://dx.doi.org/10.1109/CDC45484.2021.9682842}.

\bibitem[Frerix et~al.(2018)Frerix, M{\"o}llenhoff, Moeller, and
  Cremers]{frerix2018proxprop}
Thomas Frerix, Thomas M{\"o}llenhoff, Michael Moeller, and Daniel Cremers.
\newblock Proximal backpropagation.
\newblock In \emph{International Conference on Learning Representations}, 2018.
\newblock URL \url{https://arxiv.org/abs/1706.04638}.

\bibitem[Friston and Kiebel(2009)]{friston2009predictive}
Karl Friston and Stefan Kiebel.
\newblock Predictive coding under the free-energy principle.
\newblock \emph{Philosophical Transactions of the Royal Society B: Biological
  Sciences}, 364\penalty0 (1521):\penalty0 1211--1221, 2009.
\newblock \doi{10.1098/rstb.2008.0300}.

\bibitem[Gotmare et~al.(2018)Gotmare, Thomas, Brea, and
  Jaggi]{gotmare2018decoupling}
Akhilesh Gotmare, Valentin Thomas, Johanni Brea, and Martin Jaggi.
\newblock Decoupling backpropagation using constrained optimization methods.
\newblock In \emph{ICML 2018 Workshop on Credit Assignment in Deep Learning and
  Deep Reinforcement Learning}, 2018.
\newblock URL \url{https://openreview.net/forum?id=BygR79WfWm}.

\bibitem[Gu et~al.(2020)Gu, Askari, and El~Ghaoui]{gu2020fenchel}
Fangda Gu, Armin Askari, and Laurent El~Ghaoui.
\newblock Fenchel lifted networks: A {L}agrange relaxation of neural network
  training.
\newblock In \emph{Proceedings of the 23rd International Conference on
  Artificial Intelligence and Statistics, PMLR 108}, 2020.

\bibitem[Hansen and Ghrist(2019)]{hansen2019distributed}
Jakob Hansen and Robert Ghrist.
\newblock Distributed optimization with sheaf homological constraints.
\newblock In \emph{2019 57th Annual Allerton Conference on Communication,
  Control, and Computing}, pages 766--773, 2019.
\newblock \doi{10.1109/ALLERTON.2019.8919796}.

\bibitem[Hestenes(1969)]{hestenes1969multiplier}
Magnus~R. Hestenes.
\newblock Multiplier and gradient methods.
\newblock \emph{Journal of Optimization Theory and Applications}, 4:\penalty0
  303--320, 1969.

\bibitem[H{\o}ier et~al.(2023)H{\o}ier, Staudt, and Zach]{hoier2023dual}
Rasmus H{\o}ier, D.~Staudt, and Christopher Zach.
\newblock Dual propagation: Accelerating contrastive {H}ebbian learning with
  dyadic neurons.
\newblock In \emph{International Conference on Machine Learning, PMLR 202},
  2023.

\bibitem[Innocenti et~al.(2024)Innocenti, Achour, Singh, and
  Buckley]{innocenti2024strict-saddles}
Francesco Innocenti, El~Mehdi Achour, Ryan Singh, and Christopher~L. Buckley.
\newblock Only strict saddles in the energy landscape of predictive coding
  networks?
\newblock \emph{arXiv preprint arXiv:2408.11979}, 2024.

\bibitem[Innocenti et~al.(2025)Innocenti, Achour, and
  Buckley]{innocenti2025mupc}
Francesco Innocenti, El~Mehdi Achour, and Christopher~L. Buckley.
\newblock {$\mu$}{PC}: Scaling predictive coding to 100+ layer networks.
\newblock \emph{arXiv preprint arXiv:2505.13124}, 2025.

\bibitem[Innocenti et~al.(2026)Innocenti, Achour, and
  Bogacz]{innocenti2026infinitelimits}
Francesco Innocenti, El~Mehdi Achour, and Rafal Bogacz.
\newblock On the infinite width and depth limits of predictive coding networks.
\newblock \emph{arXiv preprint arXiv:2602.07697}, 2026.

\bibitem[LeCun(1988)]{lecun1988theoretical}
Yann LeCun.
\newblock A theoretical framework for back-propagation.
\newblock Technical report, Proceedings of the 1988 Connectionist Models Summer
  School, 1988.

\bibitem[LeCun et~al.(1998)LeCun, Bottou, Bengio, and
  Haffner]{lecun1998gradient}
Yann LeCun, L{\'e}on Bottou, Yoshua Bengio, and Patrick Haffner.
\newblock Gradient-based learning applied to document recognition.
\newblock \emph{Proceedings of the IEEE}, 86\penalty0 (11):\penalty0
  2278--2324, 1998.

\bibitem[Lee et~al.(2015)Lee, Zhang, Fischer, and Bengio]{lee2015difference}
Dong-Hyun Lee, Saizheng Zhang, Asja Fischer, and Yoshua Bengio.
\newblock Difference target propagation.
\newblock In \emph{Joint european conference on machine learning and knowledge
  discovery in databases}, pages 498--515. Springer, 2015.

\bibitem[Li et~al.(2018)Li, Fang, and Lin]{li2018lpom}
Jia Li, Cong Fang, and Zhouchen Lin.
\newblock Lifted proximal operator machines.
\newblock \emph{arXiv preprint arXiv:1811.01501}, 2018.

\bibitem[Lillicrap et~al.(2020)Lillicrap, Santoro, Marris, Akerman, and
  Hinton]{lillicrap2020backpropagation}
Timothy~P. Lillicrap, Adam Santoro, Luke Marris, Colin~J. Akerman, and Geoffrey
  Hinton.
\newblock Backpropagation and the brain.
\newblock \emph{Nature Reviews Neuroscience}, 21\penalty0 (6):\penalty0
  335--346, 2020.
\newblock \doi{10.1038/s41583-020-0277-3}.

\bibitem[Millidge et~al.(2021)Millidge, Seth, and
  Buckley]{millidge2021predictive}
Beren Millidge, Anil Seth, and Christopher~L. Buckley.
\newblock Predictive coding: A theoretical and experimental review.
\newblock \emph{arXiv preprint arXiv:2107.12979}, 2021.

\bibitem[Millidge et~al.(2022{\natexlab{a}})Millidge, Song, Salvatori,
  Lukasiewicz, and Bogacz]{millidge2022framework}
Beren Millidge, Yuhang Song, Tommaso Salvatori, Thomas Lukasiewicz, and Rafal
  Bogacz.
\newblock A theoretical framework for inference and learning in predictive
  coding networks.
\newblock \emph{arXiv preprint arXiv:2207.12316}, 2022{\natexlab{a}}.

\bibitem[Millidge et~al.(2022{\natexlab{b}})Millidge, Tschantz, and
  Buckley]{millidge2022backprop}
Beren Millidge, Alexander Tschantz, and Christopher~L. Buckley.
\newblock Predictive coding approximates backprop along arbitrary computation
  graphs.
\newblock \emph{Neural Computation}, 34\penalty0 (6):\penalty0 1329--1368,
  2022{\natexlab{b}}.
\newblock \doi{10.1162/neco_a_01497}.

\bibitem[Nocedal and Wright(2006)]{nocedal2006numerical}
Jorge Nocedal and Stephen~J. Wright.
\newblock \emph{Numerical Optimization}.
\newblock Springer, 2nd edition, 2006.

\bibitem[Pinchetti et~al.(2025)Pinchetti, Qi, Lokshyn, Olivers, Emde, Tang,
  M'Charrak, Frieder, Menzat, Bogacz, Lukasiewicz, and
  Salvatori]{pinchetti2025benchmarking}
Luca Pinchetti, Chang Qi, Oleh Lokshyn, Gaspard Olivers, Cornelius Emde, Mufeng
  Tang, Amine M'Charrak, Simon Frieder, Bayar Menzat, Rafal Bogacz, Thomas
  Lukasiewicz, and Tommaso Salvatori.
\newblock Benchmarking predictive coding networks -- made simple.
\newblock \emph{arXiv preprint arXiv:2407.01163}, 2025.

\bibitem[Powell(1969)]{powell1969method}
Michael J.~D. Powell.
\newblock A method for nonlinear constraints in minimization problems.
\newblock \emph{Optimization}, pages 283--298, 1969.

\bibitem[Rao and Ballard(1999)]{Rao1999PC}
Rajesh~P. Rao and Dana~H. Ballard.
\newblock Predictive coding in the visual cortex: a functional interpretation
  of some extra-classical receptive-field effects.
\newblock \emph{Nature Neuroscience}, 2:\penalty0 79--87, 1999.
\newblock \doi{10.1038/4580}.

\bibitem[Rosenbaum(2022)]{rosenbaum2022relationship}
Robert Rosenbaum.
\newblock On the relationship between predictive coding and backpropagation.
\newblock \emph{PLoS ONE}, 17\penalty0 (3):\penalty0 e0266102, 2022.
\newblock \doi{10.1371/journal.pone.0266102}.

\bibitem[Salvatori et~al.(2022)Salvatori, Pinchetti, Millidge, Song, Bao,
  Bogacz, and Lukasiewicz]{salvatori2022pc-graphs}
Tommaso Salvatori, Luca Pinchetti, Beren Millidge, Yuhang Song, Tianyi Bao,
  Rafal Bogacz, and Thomas Lukasiewicz.
\newblock Learning on arbitrary graph topologies via predictive coding.
\newblock In \emph{Advances in Neural Information Processing Systems}, 2022.

\bibitem[Salvatori et~al.(2026)Salvatori, Mali, Buckley, Lukasiewicz, Rao,
  Friston, and Ororbia]{salvatori2025survey}
Tommaso Salvatori, Ankur Mali, Christopher~L. Buckley, Thomas Lukasiewicz,
  Rajesh~P.N. Rao, Karl Friston, and Alexander Ororbia.
\newblock A survey on neuro-mimetic deep learning via predictive coding.
\newblock \emph{Neural Networks}, 195:\penalty0 108161, 2026.
\newblock ISSN 0893-6080.
\newblock \doi{https://doi.org/10.1016/j.neunet.2025.108161}.
\newblock URL
  \url{https://www.sciencedirect.com/science/article/pii/S089360802501041X}.

\bibitem[Scellier and Bengio(2017)]{scellier2017equilibrium}
Benjamin Scellier and Yoshua Bengio.
\newblock Equilibrium propagation: Bridging the gap between energy-based models
  and backpropagation.
\newblock \emph{Frontiers in Computational Neuroscience}, 11:\penalty0 24,
  2017.
\newblock \doi{10.3389/fncom.2017.00024}.

\bibitem[Scurria(2026)]{scurria2026physicaltheorybackpropagationexact}
Antonino~Emanuele Scurria.
\newblock A physical theory of backpropagation: Exact gradients from the
  least-action principle, 2026.
\newblock URL \url{https://arxiv.org/abs/2602.02281}.

\bibitem[Seely(2025)]{seely2025sheaf}
Jeffrey Seely.
\newblock Sheaf cohomology of linear predictive coding networks.
\newblock \emph{arXiv preprint arXiv:2511.11092}, 2025.

\bibitem[Song et~al.(2024)Song, Millidge, Salvatori, Lukasiewicz, Xu, and
  Bogacz]{song2024prospective}
Yuhang Song, Beren Millidge, Tommaso Salvatori, Thomas Lukasiewicz, Zhenghua
  Xu, and Rafal Bogacz.
\newblock Inferring neural activity before plasticity as a foundation for
  learning beyond backpropagation.
\newblock \emph{Nature Neuroscience}, 2024.
\newblock \doi{10.1038/s41593-023-01514-1}.

\bibitem[Taylor et~al.(2016)Taylor, Burmeister, Xu, Singh, Patel, and
  Goldstein]{taylor2016admm}
Gavin Taylor, Ryan Burmeister, Zheng Xu, Bharat Singh, Ankit Patel, and Tom
  Goldstein.
\newblock Training neural networks without gradients: A scalable {ADMM}
  approach.
\newblock In \emph{Proceedings of the 33rd International Conference on Machine
  Learning, PMLR 48}, 2016.

\bibitem[Wang et~al.(2019)Wang, Yu, Chen, and Zhao]{wang2019dladmm}
Junxiang Wang, Fuxun Yu, Xiang Chen, and Liang Zhao.
\newblock {ADMM} for efficient deep learning with global convergence.
\newblock In \emph{Proceedings of the 25th ACM SIGKDD International Conference
  on Knowledge Discovery \& Data Mining}, KDD '19, page 111–119, New York,
  NY, USA, 2019. Association for Computing Machinery.
\newblock ISBN 9781450362016.
\newblock \doi{10.1145/3292500.3330936}.
\newblock URL \url{https://doi.org/10.1145/3292500.3330936}.

\bibitem[Wang and Benning(2023)]{wang2023bregman}
Xiaoyu Wang and Martin Benning.
\newblock Lifted {B}regman training of neural networks.
\newblock \emph{Journal of Machine Learning Research}, 24, 2023.

\bibitem[Wang et~al.(2026)Wang, Valavanis, Mahmood, Mang, Benning, and
  Repetti]{wang2025unified}
Xiaoyu Wang, Alexandra Valavanis, Azhir Mahmood, Andreas Mang, Martin Benning,
  and Audrey Repetti.
\newblock A unified framework for lifted training and inversion approaches.
\newblock \emph{arXiv preprint arXiv:2510.09796}, 2026.

\bibitem[Wang et~al.(2025)Wang, Zhang, and Chen]{wang2024rnn-alm}
Yue Wang, Chao Zhang, and Xiaojun Chen.
\newblock An augmented lagrangian method for training recurrent neural
  networks.
\newblock \emph{SIAM Journal on Scientific Computing}, 47\penalty0
  (1):\penalty0 C22--C51, 2025.
\newblock \doi{10.1137/23M1627614}.
\newblock URL \url{https://doi.org/10.1137/23M1627614}.

\bibitem[Whittington and Bogacz(2017)]{whittington2017approximation}
James C.~R. Whittington and Rafal Bogacz.
\newblock An approximation of the error backpropagation algorithm in a
  predictive coding network with local hebbian synaptic plasticity.
\newblock \emph{Neural Computation}, 29\penalty0 (5):\penalty0 1229--1262,
  2017.
\newblock \doi{10.1162/neco_a_00949}.

\bibitem[Xiao et~al.(2017)Xiao, Rasul, and Vollgraf]{xiao2017fashion}
Han Xiao, Kashif Rasul, and Roland Vollgraf.
\newblock Fashion-{MNIST}: a novel image dataset for benchmarking machine
  learning algorithms.
\newblock \emph{arXiv preprint arXiv:1708.07747}, 2017.

\bibitem[Zach and Estellers(2019)]{zach2019contrastive}
Christopher Zach and Virginia Estellers.
\newblock Contrastive learning for lifted networks.
\newblock \emph{arXiv preprint arXiv:1905.02507}, 2019.

\bibitem[Zeng et~al.(2021)Zeng, Lin, Yao, and Zhou]{zeng2021admm}
Jinshan Zeng, Shao-Bo Lin, Yuan Yao, and Ding-Xuan Zhou.
\newblock On {ADMM} in deep learning: Convergence and saturation-avoidance.
\newblock \emph{Journal of Machine Learning Research}, 22, 2021.

\bibitem[Zhou et~al.(2026)]{zhou2026pisa}
Shenglong Zhou et~al.
\newblock Preconditioned inexact stochastic {ADMM} for deep models.
\newblock \emph{Nature Machine Intelligence}, 2026.
\newblock \doi{10.1038/s42256-026-01182-3}.

\end{thebibliography}

\appendix
\section{KKT multipliers as backpropagation adjoints}
\label{app:kkt-proof}

For a fixed set of weights $\theta$ and one sample $(x,y)$, consider the activity-constrained inner problem induced by \eqref{eq:constrained}. At any feasible KKT point of this fixed-weight problem, the Lagrange multipliers satisfy the same reverse recursion as the backpropagation adjoints, and evaluating the weight derivative of the Lagrangian at that point gives the BP weight gradient. We recapitulate this point for the standard Lagrangian (the classical observation of \citet{lecun1988theoretical}, recorded here in our notation conventions) and then show that the quadratic augmentation does not change the endpoint.

Throughout, the backpropagation adjoint at hidden layer $i$ is the total derivative of the supervised loss after composing the downstream layers:
\begin{equation*}
\delta_i \;:=\; \frac{d}{d h_i}\, \tfrac{1}{2}\bigl\|y - W_L h_{L-1}\bigr\|^2
\qquad \text{evaluated along the forward pass} \quad h_j = \sigma(W_j h_{j-1}),\ h_0 = x.
\end{equation*}
By the chain rule,
\begin{equation*}
\delta_{L-1} = -W_L\tp\bigl(y - W_L h_{L-1}\bigr),
\qquad
\delta_i = W_{i+1}\tp\, \mathrm{diag}\bigl(\sigma'(W_{i+1} h_i)\bigr)\, \delta_{i+1} \quad \text{for } i = 1, \ldots, L-2.
\end{equation*}

\subsection{Standard Lagrangian}

Setting $\rho = 0$ in \eqref{eq:aug-lag} gives the standard Lagrangian
\begin{equation*}
\Lag_0(h, \theta, \lambda)
\;=\; \tfrac{1}{2}\bigl\|y - W_L h_{L-1}\bigr\|^2
\;+\; \sum_{i=1}^{L-1} \lambda_i\tp\bigl(h_i - \sigma(W_i h_{i-1})\bigr).
\end{equation*}

\begin{proposition}\label{prop:kkt-standard}
For fixed $\theta$, at any feasible KKT point of the activity-constrained inner problem, the multipliers of $\Lag_0$ satisfy $\lambda_i = -\delta_i$ for $i = 1, \ldots, L-1$. Consequently, the derivative $\nabla_{W_i} \Lag_0$ evaluated at this endpoint equals the backpropagation gradient of $\tfrac{1}{2}\|y - W_L h_{L-1}\|^2$ with respect to $W_i$.
\end{proposition}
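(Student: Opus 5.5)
We write out the KKT conditions of the fixed-weight inner problem and solve them by back-substitution from the output layer. With $\theta$ fixed, the inner problem is to minimize $\tfrac12\|y - W_L h_{L-1}\|^2$ over $h = \{h_i\}_{i=1}^{L-1}$, subject to $h_i = \sigma(W_i h_{i-1})$ with $h_0 = x$. A feasible KKT point is a pair $(h,\lambda)$ satisfying two conditions. The first is primal feasibility: all residuals $r_i$ vanish, so $h$ is exactly the forward pass. The second is stationarity, $\nabla_{h_i}\Lag_0 = 0$ for $i = 1,\ldots,L-1$. Since the constraints are equalities of the form $h_i - \sigma(W_i h_{i-1})$, their Jacobian with respect to $h$ is block lower-bidiagonal with identity diagonal blocks. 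It therefore has full row rank, so LICQ holds automatically. This guarantees that the multipliers exist and are unique at any feasible point.

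Next we compute the stationarity equations, reading them off \eqref{eq:gradh-pcalm} with $\rho = 0$. For $1 \le i \le L-2$ we get
\[
\nabla_{h_i}\Lag_0 = \lambda_i - W_{i+1}\tp\,\mathrm{diag}\bigl(\sigma'(W_{i+1}h_i)\bigr)\,\lambda_{i+1}.
\]
The top hidden layer is the one place where care is needed. There $h_{L-1}$ enters the loss directly rather than through another constraint, so
\[
\nabla_{h_{L-1}}\Lag_0 = -W_L\tp(y - W_L h_{L-1}) + \lambda_{L-1}.
\]
Setting these to zero gives the terminal condition $\lambda_{L-1} = W_L\tp(y - W_L h_{L-1}) = -\delta_{L-1}$. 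It also gives the recursion $\lambda_i = W_{i+1}\tp\,\mathrm{diag}\bigl(\sigma'(W_{i+1}h_i)\bigr)\,\lambda_{i+1}$. Feasibility puts $h$ at the forward-pass values, so this recursion has exactly the same coefficients as the one for $\delta_i$. Since the recursion is linear and homogeneous, a downward induction from $i = L-1$ gives $\lambda_i = -\delta_i$ for every $i$.

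For the weight gradient we differentiate $\Lag_0$ with respect to each weight matrix. For $i \le L-1$, $W_i$ appears only in the term $-\lambda_i\tp\sigma(W_i h_{i-1})$, so
\[
\nabla_{W_i}\Lag_0 = -\mathrm{diag}\bigl(\sigma'(W_i h_{i-1})\bigr)\,\lambda_i\,h_{i-1}\tp = \mathrm{diag}\bigl(\sigma'(W_i h_{i-1})\bigr)\,\delta_i\,h_{i-1}\tp.
\]
This is the chain-rule BP gradient $\partial \text{Loss}/\partial W_i$ with $h_{i-1}$ taken from the forward pass. For $W_L$, the readout weight appears only in the loss, so $\nabla_{W_L}\Lag_0 = -(y - W_L h_{L-1})\,h_{L-1}\tp$, which matches BP directly.

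The argument is essentially bookkeeping, and we expect the main obstacle to be getting the sign convention and boundary terms right. Two points need checking. The first is that $\lambda$ enters $\Lag_0$ with a plus sign on $h_i - \sigma(\cdot)$, which is why $\lambda_i = -\delta_i$ rather than $+\delta_i$. The second is that the top layer's stationarity equation contains the loss term instead of a $\lambda_L$ term. Both are handled by writing the $i = L-1$ case separately before running the induction.
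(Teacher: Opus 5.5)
Your proposal is correct and follows the paper's proof essentially step for step. Feasibility pins $h$ to the forward pass, the $i=L-1$ stationarity equation gives $\lambda_{L-1} = -\delta_{L-1}$, downward induction on the shared recursion gives $\lambda_i = -\delta_i$, and differentiating $\Lag_0$ in $W_i$ yields the BP gradient. Your LICQ remark is a harmless extra that the paper does not need, since the statement already assumes a KKT point; because the constraint Jacobian in $h$ is square and unit lower-triangular, it in fact shows the forward pass is a KKT point with unique multipliers.
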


\begin{proof}
Primal feasibility forces $h_i = \sigma(W_i h_{i-1})$ for $i = 1, \ldots, L-1$, so $h$ lies on the forward pass.

\underline{\emph{Boundary, $i = L-1$}:} The activation $h_{L-1}$ enters $\Lag_0$ via the supervised loss and the layer-$(L-1)$ residual:
\begin{equation*}
\nabla_{h_{L-1}} \Lag_0 \;=\; -W_L\tp\bigl(y - W_L h_{L-1}\bigr) \;+\; \lambda_{L-1}.
\end{equation*}
Stationarity gives $\lambda_{L-1} = W_L\tp(y - W_L h_{L-1}) = -\delta_{L-1}$.

\underline{\emph{Recursion, $1 \le i \le L-2$}:} The activation $h_i$ enters the layer-$i$ residual as the LHS (multiplier $\lambda_i$) and the layer-$(i+1)$ residual through $\sigma(W_{i+1} h_i)$ (multiplier $\lambda_{i+1}$):
\begin{equation*}
\nabla_{h_i} \Lag_0 \;=\; \lambda_i \;-\; W_{i+1}\tp\, \mathrm{diag}\bigl(\sigma'(W_{i+1} h_i)\bigr)\, \lambda_{i+1}.
\end{equation*}
Stationarity gives $\lambda_i = W_{i+1}\tp \mathrm{diag}(\sigma'(W_{i+1} h_i))\, \lambda_{i+1}$. Induction from the boundary yields $\lambda_i = -\delta_i$ for all $i$.

\underline{\emph{Weight gradient}:} For $i = 1, \ldots, L-1$, only the $i$-th layer constraint involves $W_i$:
\begin{equation*}
\nabla_{W_i} \Lag_0 \;=\; -\mathrm{diag}\bigl(\sigma'(W_i h_{i-1})\bigr)\, \lambda_i\, h_{i-1}\tp \;=\; \mathrm{diag}\bigl(\sigma'(W_i h_{i-1})\bigr)\, \delta_i\, h_{i-1}\tp,
\end{equation*}
which is the backpropagation gradient at layer $i$. For the readout, $\nabla_{W_L}\Lag_0 = -(y - W_L h_{L-1})\, h_{L-1}\tp$, the linear-regression gradient.
\end{proof}

The minus sign in $\lambda_i = -\delta_i$ is only a convention: it comes from writing the residual as $r_i = h_i - \sigma(W_i h_{i-1})$. With the opposite residual convention, the multipliers would have the opposite sign. The invariant statement is that the multiplier recursion carries exactly the reverse-mode adjoint information, and the induced weight derivative is the BP gradient.

\subsection{Augmented Lagrangian}

The augmented Lagrangian \eqref{eq:aug-lag} adds the quadratic penalty $\tfrac{\rho}{2}\sum_{i=1}^{L-1}\|r_i\|^2$ to $\Lag_0$, where $r_i := h_i - \sigma(W_i h_{i-1})$.

\begin{proposition}\label{prop:kkt-augmented}
For the fixed-weight activity problem, the KKT conditions and endpoint weight derivative of $\Lag_\rho$ coincide with those of $\Lag_0$. In particular, $\lambda_i = -\delta_i$ at any feasible KKT point, and $\nabla_{W_i}\Lag_\rho$ evaluated at that endpoint equals the BP weight gradient.
\end{proposition}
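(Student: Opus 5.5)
The plan is to reduce everything to Proposition~\ref{prop:kkt-standard} by showing that the quadratic augmentation contributes nothing at a feasible point, either to the stationarity conditions in $h$ or to the weight derivative. Write $\Lag_\rho = \Lag_0 + P_\rho$ with $P_\rho(h,\theta) := \tfrac{\rho}{2}\sum_{i=1}^{L-1}\|r_i\|^2$. The KKT system of the fixed-weight problem with $\Lag_\rho$ consists of primal feasibility $r_i = 0$ for all $i$, together with stationarity $\nabla_h \Lag_\rho = 0$. Feasibility is the same constraint set in both formulations, so the comparison reduces to the gradients.

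The key step is a one-line chain-rule computation. Each $r_i$ is a smooth function of $(h,\theta)$ (assuming $\sigma$ is differentiable, as is already implicit in Proposition~\ref{prop:kkt-standard}). Hence
\begin{equation*}
\nabla_h P_\rho = \rho\sum_i (\partial_h r_i)\tp r_i, \qquad \nabla_{W_j} P_\rho = \rho\sum_i (\partial_{W_j} r_i)\tp r_i .
\end{equation*}
Both vanish whenever every $r_i = 0$. Concretely, I would use the explicit form \eqref{eq:gradh-pcalm}: $\nabla_{h_i}\Lag_\rho$ equals $\nabla_{h_i}\Lag_0$ with $\lambda_i$ replaced by $\lambda_i + \rho r_i$. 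At feasibility that replacement is the identity, and the same holds for the boundary layer $L-1$ and for $\nabla_{W_i}$, which carries the composite signal $\lambda_i + \rho r_i$.

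It follows that at any feasible point the stationarity equations for $\Lag_\rho$ and $\Lag_0$ are literally identical. The KKT sets therefore coincide, with the same $(h,\lambda)$. Invoking Proposition~\ref{prop:kkt-standard} then gives $\lambda_i = -\delta_i$. Finally, since $\nabla_{W_i}\Lag_\rho = \nabla_{W_i}\Lag_0$ at the endpoint, the weight derivative is the BP gradient. The readout $W_L$ does not appear in any penalty term at all, so that case is immediate.

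No step here is genuinely hard. The only point needing care is precision about what "KKT conditions coincide" means. The equality $\nabla\Lag_\rho = \nabla\Lag_0$ holds only on the feasible set, not everywhere. The claim is therefore about feasible KKT points, which is exactly what the statement asserts, and I would note explicitly that the multiplier associated with $\Lag_\rho$ is not rescaled by $\rho$.
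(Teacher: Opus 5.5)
Your proposal is correct and follows the paper's own argument: the paper likewise observes that the $h$- and $W$-derivatives of the penalty $\tfrac{\rho}{2}\sum_i\|r_i\|^2$ are linear in $r$ and so vanish under primal feasibility, after which Proposition~\ref{prop:kkt-standard} applies directly. Your explicit chain-rule formulas, the boundary-layer check via the composite signal $\lambda_i + \rho r_i$, and the remark that $W_L$ never enters the penalty only fill in details the paper leaves implicit.
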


\begin{proof}
The augmentation $\tfrac{\rho}{2}\sum_i \|r_i\|^2$ is a smooth function of $r$ whose value and all $h$-, $W$-derivatives are linear in $r$ and therefore vanish at $r = 0$. Primal feasibility forces $r = 0$, so the stationarity conditions and weight gradient of $\Lag_\rho$ at any feasible point reduce to those of $\Lag_0$, and Proposition~\ref{prop:kkt-standard} applies.
\end{proof}

\paragraph{Remark.}
Proposition~\ref{prop:kkt-augmented} is endpoint-only: it characterizes any feasible KKT point but does not say the PC-ALM iteration reaches one. Linear-regime convergence to the KKT point is in Appendix~\ref{app:linear-pcalm-dynamics}.

\paragraph{Feedforward graph extension.}
The same calculation applies to feedforward graphs with skips. At a hidden node, stationarity collects the multiplier on every incoming constraint and subtracts the Jacobian-transpose pullback from every outgoing constraint. On a DAG, solving these equations in reverse topological order gives the usual reverse-mode accumulation. The chain proof above is the single-parent, single-child specialization.

\section{Architectures}
\label{app:architectures}

\begin{figure}[t]
    \centering
    \includegraphics[width=0.75\linewidth]{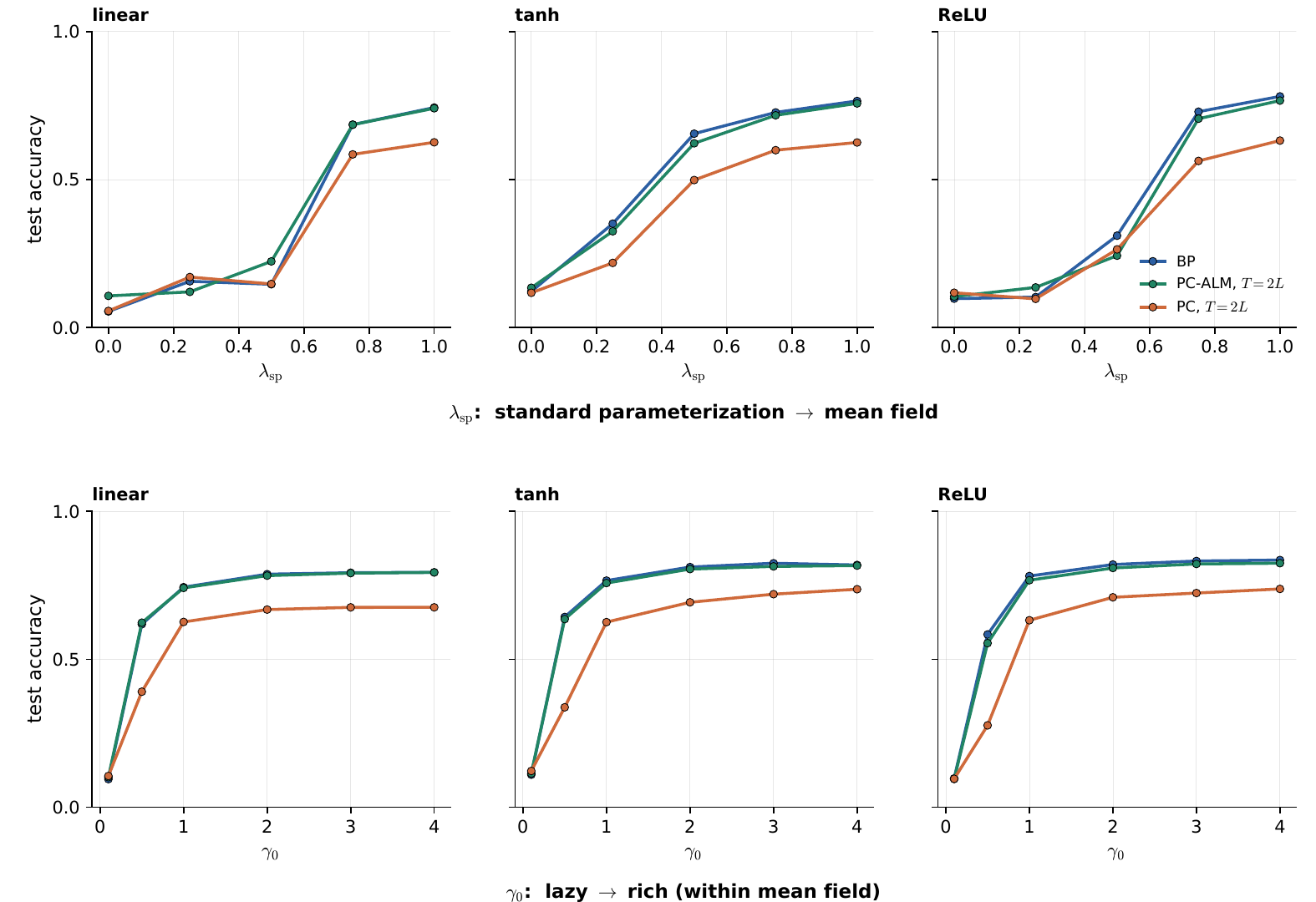}
    \caption{Parameterization sweep on Fashion-MNIST at fixed $(N, L) = (32, 64)$, single seed, $\alpha=1$. Top row: $\lambda_{\mathrm{sp}}$ interpolates from standard parameterization ($\lambda_{\mathrm{sp}}{=}0$) to mean field ($\lambda_{\mathrm{sp}}{=}1$) at $\gamma_0{=}1$. Bottom row: $\gamma_0$ traces the lazy-to-rich axis within the mean-field family at $\lambda_{\mathrm{sp}}{=}1$. PC-ALM at $T{=}2L$ tracks BP across both axes; PC at $T{=}2L$ tracks BP in the lazy regime but lags as the parameterization becomes rich.}
    \label{fig:param-sweep-fashion}
    \end{figure}
    
    \begin{figure}[t]
    \centering
    \includegraphics[width=0.75\linewidth]{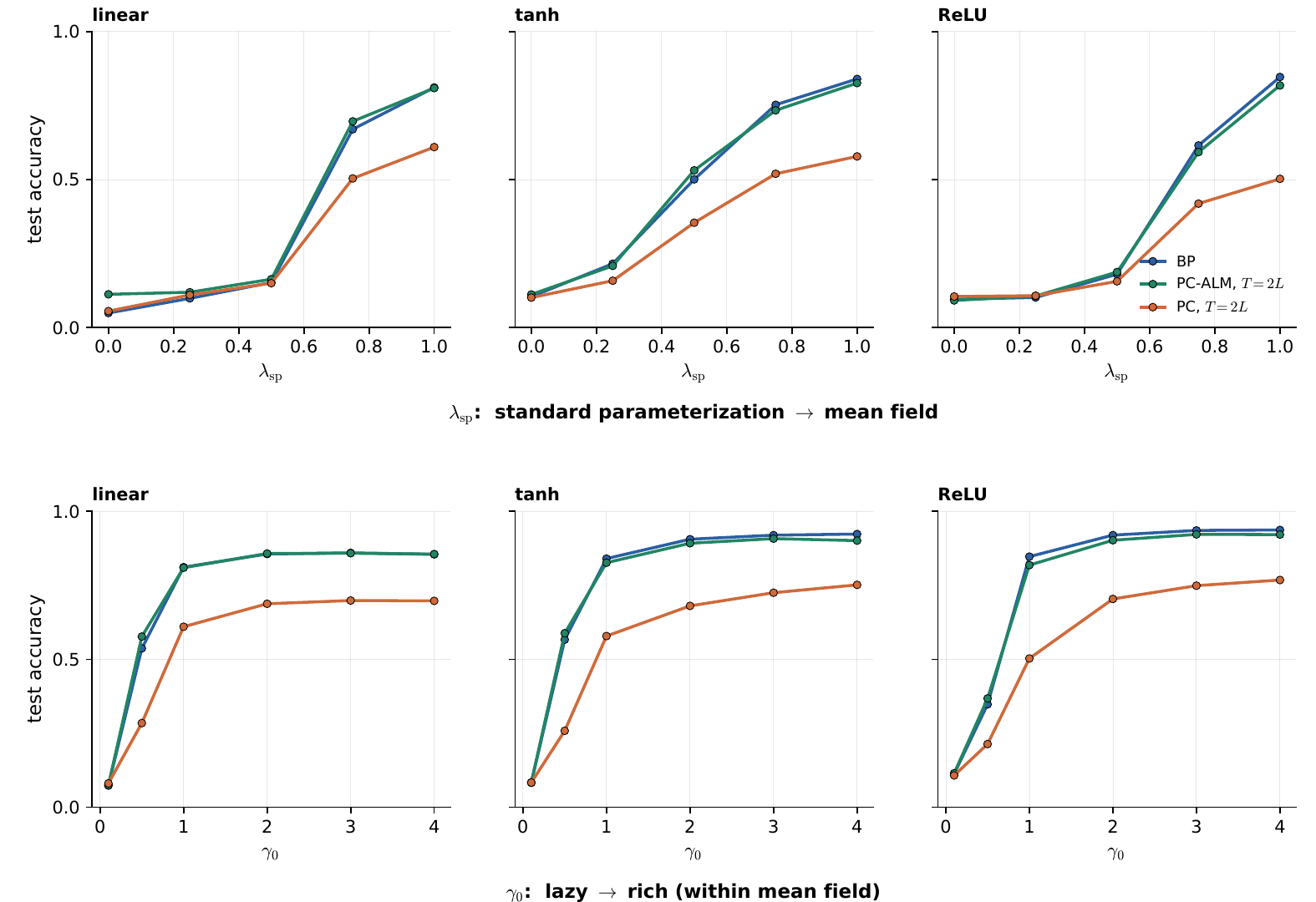}
    \caption{MNIST counterpart of Figure~\ref{fig:param-sweep-fashion}. Same axes, same fixed $(N, L) = (32, 64)$, same conclusion: PC-ALM tracks BP across the parameterization sweep; PC lags in the rich regime.}
    \label{fig:param-sweep-mnist}
\end{figure}

In our main equations we use the layer equations as $h_i = \sigma(W_i h_{i-1})$ for notational simplicity. The experiments use the scaled architectures of \citet{innocenti2026infinitelimits}, where each layer carries a scalar constant that initializes and stabilizes PC at depth and width. The architectures and constants are below.

\subsection{Architectures}

A depth-$L$ network has weights $W_1, \ldots, W_L$ and free hidden states $h_1, \ldots, h_{L-1}$ of width $N$, with $h_0 = x \in \reals^D$ clamped and prediction $\hat y \in \reals^C$. Each layer carries a scalar pre-multiplier $a_i$.

\paragraph{Chain MLP.}
\begin{equation*}
h_1 \;=\; a_1\,W_1 h_0,
\qquad
h_i \;=\; a_i\,W_i\,\sigma(h_{i-1}) \quad (2 \le i \le L-1),
\qquad
\hat y \;=\; a_L\,W_L\,\sigma(h_{L-1}).
\end{equation*}
The input layer is linear; interior and readout layers apply $\sigma \in \{\mathrm{id},\,\tanh,\,\mathrm{ReLU}\}$ before the linear map.

\paragraph{Residual MLP.}
\begin{equation*}
h_1 \;=\; a_1\,W_1 h_0,
\qquad
h_i \;=\; h_{i-1} \;+\; a_i\,W_i\,\sigma(h_{i-1}) \quad (2 \le i \le L-1),
\qquad
\hat y \;=\; a_L\,W_L\,\sigma(h_{L-1}).
\end{equation*}
Input and readout layers are unchanged from the chain; interior layers add an identity skip from the previous layer. The experiments use this topology.

\subsection{Mean-field point}

At the mean-field parameterization of \citet{innocenti2026infinitelimits}---hidden activation exponent $a = 1/2$, init exponent $b = 0$, learning-rate exponent $c = 0$, output-scale exponent $d = 1/2$, residual depth exponent $\alpha_{\mathrm{res}} = 1/2$ (distinct from the PC-ALM dual rate $\alpha$)---the per-layer pre-multipliers are
\begin{equation*}
a_1 \;=\; \frac{1}{\sqrt{D}},
\qquad
a_i \;=\; \frac{1}{\sqrt{L\,N}} \quad (2 \le i \le L-1),
\qquad
a_L \;=\; \frac{1}{\gamma_0\,N},
\end{equation*}
with all weights drawn $\mathcal{N}(0, 1)$ entrywise. The Adam learning rate is coupled to the parameterization \citep{innocenti2026infinitelimits},
\begin{equation*}
\eta_{\mathrm{Adam}} \;=\; \eta_0\,\gamma_0^2\,\sqrt{N/L},
\end{equation*}
which collapses to the $\mu$PC default \citep{innocenti2025mupc} at $\gamma_0 = 1$.

\section{Linear PC-ALM dynamics}
\label{app:linear-pcalm-dynamics}

This appendix isolates Algorithm~\ref{alg:pcalm} in the fixed-weight, linear-activation regime. With $\sigma = \mathrm{id}$ and weights frozen, one PC-ALM inner step is an affine map on the joint activity-dual state $(h, \lambda)$. We give the block iteration matrix $M$, prove convergence to the KKT point under the spectral-radius condition $\mathrm{spr}(M) < 1$, and recover the per-mode Jury bound as a corollary. Architecture-specific information (chain vs.\ skip MLP) enters only through the block constraint operator $A$; the explicit forms are in Appendix~\ref{app:architectures}.

\subsection{Setup}

Take the constrained problem \eqref{eq:constrained} with $\sigma = \mathrm{id}$ and weights $\theta$ fixed. The free hidden states are $h = \{h_i\}_{i=1}^{L-1}$, and the multipliers are $\lambda = \{\lambda_i\}_{i=1}^{L-1}$. The residuals can be stacked as a column vector,
\begin{equation*}
r(h) \;=\; \begin{bmatrix} r_1 \\ \vdots \\ r_{L-1} \end{bmatrix},
\qquad
r_i \;=\; h_i - W_i\,h_{i-1},
\end{equation*}
The residuals are affine in $h$ and we write $r(h) = A h + b$, where $A$ is block lower-bidiagonal with $I$ on the diagonal and the negative per-layer linear map below; $b$ absorbs the clamped-input contribution from $h_0 = x$. For a plain chain,
\begin{equation*}
A_{\mathrm{chain}}
\;=\;
\begin{bmatrix}
I & & & & \\
-W_2 & I & & & \\
& -W_3 & I & & \\
& & \ddots & \ddots & \\
& & & -W_{L-1} & I
\end{bmatrix},
\qquad
b_{\mathrm{chain}}
\;=\;
\begin{bmatrix}
-W_1 x \\ 0 \\ \vdots \\ 0
\end{bmatrix}.
\end{equation*}
The skip form has the same block pattern, with each sub-diagonal block
replaced by the negative skip-layer prediction map; the scaled architecture
forms are listed in Appendix~\ref{app:architectures}.

The supervised loss touches only $h_{L-1}$:
\begin{equation*}
\ell(h) \;=\; \tfrac{1}{2}\,\bigl\|y - C h\bigr\|^2,
\qquad
C \;=\; \bigl[\,0\;\cdots\;0\;\;W_L\,\bigr],
\qquad
B \;:=\; C\tp C.
\end{equation*}
We keep this block separate from the constraint operator: multipliers attach
to the network equations $Ah+b=0$, not to the supervised residual $y-Ch$.
The supervised term enters the activity curvature through $B$ and therefore
affects the full stability condition below.
The augmented Lagrangian \eqref{eq:aug-lag} in this regime is
\begin{equation*}
\Lag_\rho(h, \lambda)
\;=\; \tfrac{1}{2}\bigl\|y - C h\bigr\|^2
\;+\; \lambda\tp\bigl(A h + b\bigr)
\;+\; \tfrac{\rho}{2}\,\bigl\|A h + b\bigr\|^2.
\end{equation*}

\subsection{One PC-ALM step}

Algorithm~\ref{alg:pcalm} writes the inference loop as in-place assignments.
For the dynamical-systems analysis here, $t$ indexes the repeated inference
step, and $h(t),\lambda(t)$ denote the full stacked hidden-state and multiplier
vectors before step $t$. Unrolling one in-place primal-dual cycle gives

\begin{align}
h(t+1)
&\;=\; h(t) \;-\; \eta_h\,\bigl[\,-C\tp(y - C h(t)) \;+\; A\tp \lambda(t) \;+\; \rho\,A\tp(A h(t) + b)\,\bigr],
\label{eq:lpcalm-h} \\
\lambda(t+1)
&\;=\; \lambda(t) \;+\; \alpha\,\bigl(A h(t+1) + b\bigr).
\label{eq:lpcalm-lambda}
\end{align}
Both updates are affine in $(h, \lambda)$, so PC-ALM is an affine dynamical system on the joint state.

\subsection{Block iteration matrix}

Suppose \eqref{eq:lpcalm-h}--\eqref{eq:lpcalm-lambda} has a fixed point $(h^\star, \lambda^\star)$. In error coordinates $E(t) := h(t) - h^\star$ and $R(t) := \lambda(t) - \lambda^\star$, the affine offsets cancel, leaving the linear recursion
\begin{equation}
\label{eq:linear-pcalm-block}
\begin{bmatrix} E(t+1) \\ R(t+1) \end{bmatrix}
\;=\;
M
\begin{bmatrix} E(t) \\ R(t) \end{bmatrix},
\qquad
M \;=\;
\begin{bmatrix}
I - \eta_h(B + \rho\,A\tp A) & -\eta_h\,A\tp \\[2pt]
\alpha\,A\bigl(I - \eta_h(B + \rho\,A\tp A)\bigr) & I - \alpha\eta_h\,A A\tp
\end{bmatrix}.
\end{equation}
Chain and skip MLPs share this expression; only $A$ changes (Appendix~\ref{app:architectures}).

\subsection{Main result: convergence to the KKT point and BP gradient}

\begin{proposition}[Linear convergence of PC-ALM]\label{prop:linear-convergence}
Fix weights $\theta$ and take $\sigma = \mathrm{id}$, $\rho > 0$, $\alpha > 0$, $\eta_h > 0$. If $\mathrm{spr}(M) < 1$, then for any initialization $(h(0), \lambda(0))$ the iteration \eqref{eq:lpcalm-h}--\eqref{eq:lpcalm-lambda} converges linearly to a unique fixed point $(h^\star, \lambda^\star)$, and:
\begin{enumerate}
\item $h^\star$ satisfies $A h^\star + b = 0$ (the linear forward pass);
\item $\lambda^\star$ are the BP adjoints of $\tfrac{1}{2}\|y - C h\|^2$ at the linear forward pass;
\item the weight derivative at the fixed point, $\nabla_\theta \Lag_\rho(h^\star, \theta, \lambda^\star)$, equals the BP weight gradient.
\end{enumerate}
\end{proposition}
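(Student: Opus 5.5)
The plan is to treat one PC-ALM inner step \eqref{eq:lpcalm-h}--\eqref{eq:lpcalm-lambda} as an affine map $z \mapsto Mz + c$ on the joint state $z = (h,\lambda)$. Then I will exhibit a fixed point explicitly, use $\mathrm{spr}(M)<1$ for uniqueness and linear convergence, and read off items 1--3 from the fixed-point equations together with Proposition~\ref{prop:kkt-augmented}.

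\emph{Step 1: explicit fixed point.} $A$ is block lower-bidiagonal with identity diagonal blocks, so it is invertible; the same holds for the skip form. Define $h^\star := -A^{-1}b$ and $\lambda^\star := A^{-\top} C\tp (y - C h^\star)$.
\begin{itemize}
\item Substituting into \eqref{eq:lpcalm-h} makes the bracket $-C\tp(y-Ch^\star) + A\tp\lambda^\star + \rho A\tp(Ah^\star+b)$ vanish, so $h(t+1)=h^\star$.
\item Then \eqref{eq:lpcalm-lambda} adds $\alpha(Ah^\star+b)=0$ to $\lambda^\star$, so $\lambda$ is unchanged.
\end{itemize}
Hence $(h^\star,\lambda^\star)$ is a fixed point. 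The affine offsets then cancel in error coordinates, which is exactly how \eqref{eq:linear-pcalm-block} was derived, with the same $M$.

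\emph{Step 2: uniqueness and convergence.} Since $\mathrm{spr}(M)<1$, the matrix $I-M$ is invertible, so the affine map has at most one fixed point; it is therefore the one built in Step 1. For any initialization, the error obeys $(E(t),R(t)) = M^t (E(0),R(0))$. Gelfand's formula gives $\|M^t\| \le C_\epsilon(\mathrm{spr}(M)+\epsilon)^t$ for any $\epsilon>0$. Choosing $\epsilon$ with $\mathrm{spr}(M)+\epsilon<1$ yields linear convergence.

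\emph{Step 3: identification.} Item 1 holds by construction: $Ah^\star+b=0$ is the linear forward pass. Equivalently, for any fixed point, the dual update with $\alpha>0$ forces $Ah^\star+b=0$.

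For item 2, at such a point the primal fixed-point equation reduces to $-C\tp(y-Ch^\star) + A\tp\lambda^\star = 0$, because the penalty term vanishes at feasibility. This is precisely stationarity of $\Lag_0$ in $h$. Writing it blockwise with $\sigma=\mathrm{id}$ gives:
\begin{itemize}
\item at the last block, $\lambda_{L-1} = W_L\tp(y-W_Lh_{L-1})$;
\item at earlier blocks, $\lambda_i = W_{i+1}\tp\lambda_{i+1}$ (or the skip analogue).
\end{itemize}
So $(h^\star,\lambda^\star)$ is a feasible KKT point. Proposition~\ref{prop:kkt-standard} then gives $\lambda^\star_i=-\delta_i$, i.e.\ the BP adjoints under the paper's sign convention. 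Item 3 then follows directly from Proposition~\ref{prop:kkt-augmented}: the weight derivative of $\Lag_\rho$ at a feasible KKT point equals the BP weight gradient.

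The main obstacle is mild. One must check that the in-place ordering, with the dual step using $h(t+1)$, is consistent with the $M$ in \eqref{eq:linear-pcalm-block}. One must also make sure that existence of the fixed point is argued independently, via invertibility of $A$, rather than assumed as in the derivation of $M$. Both are handled by the explicit construction in Step 1.
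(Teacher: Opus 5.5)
Your proposal is correct and follows the paper's proof. Both arguments write the step as an affine map with linear part $M$, obtain uniqueness and geometric decay from $\mathrm{spr}(M)<1$, use $\alpha>0$ in the dual update to force $Ah^\star+b=0$, and identify the fixed point as a feasible KKT point so that Appendix~\ref{app:kkt-proof} gives items 2 and 3. Your explicit construction of $(h^\star,\lambda^\star)$ via $A^{-1}$ is not needed, since the paper gets existence directly from invertibility of $I-M$ (the linear part of the affine map is $M$ whether or not a fixed point is assumed), but it is harmless and usefully shows that the fixed point is the KKT point independently of $\eta_h$ and $\alpha$.
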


\begin{proof}
In affine coordinates the iteration takes the form $z(t+1) = M z(t) + c$ for $z = (h, \lambda)$ and a constant $c$ absorbing the input clamp $b$ and supervised-loss offset. $\mathrm{spr}(M) < 1$ implies $I - M$ is invertible, so the fixed-point equation $z^\star = M z^\star + c$ has a unique solution. In error coordinates $E(t), R(t)$ defined above, the affine offset cancels, leaving $z(t) - z^\star = M^t (z(0) - z^\star)$, which converges to $0$ at rate $\mathrm{spr}(M)$.

At the fixed point, the dual update \eqref{eq:lpcalm-lambda} forces $A h^\star + b = 0$ since $\alpha > 0$. The activity stationarity \eqref{eq:lpcalm-h} then reduces to $-C\tp(y - C h^\star) + A\tp \lambda^\star = 0$. These are the KKT conditions of the linear constrained least-squares problem
\begin{equation*}
\min_h\ \tfrac{1}{2}\bigl\|y - C h\bigr\|^2 \quad \text{s.t.}\quad A h + b \;=\; 0.
\end{equation*}
Appendix~\ref{app:kkt-proof} (specialized to $\sigma = \mathrm{id}$, $r_i = h_i - W_i h_{i-1}$) identifies these multipliers as the BP adjoints and the weight derivative at the KKT point as the BP weight gradient.
\end{proof}

\subsection{Modewise Jury bound}

To get a clean per-mode picture, drop the supervised-loss block ($B = 0$); only the constraint penalty remains in $M$. With the SVD $A = U \Sigma V\tp$ and mode coordinates $x^{(i)}(t) := v_i\tp E(t)$, $y^{(i)}(t) := u_i\tp R(t)$, each positive singular mode evolves independently under the $2 \times 2$ map
\begin{equation}
\label{eq:linear-pcalm-mode}
M_i \;=\;
\begin{bmatrix}
1 - \eta_h\rho\sigma_i^2 & -\eta_h\,\sigma_i \\[2pt]
\alpha\sigma_i\,\bigl(1 - \eta_h\rho\sigma_i^2\bigr) & 1 - \alpha\eta_h\,\sigma_i^2
\end{bmatrix},
\end{equation}
where $\sigma_i$ denotes the $i$-th singular value of $A$ (not the activation $\sigma$, which is the identity throughout this appendix).

\begin{corollary}[Schur stability per mode]\label{cor:modewise-jury}
For $\eta_h, \rho, \alpha > 0$ and $\sigma_i > 0$, $M_i$ is Schur-stable iff
\begin{equation}
\label{eq:linear-pcalm-mode-stability}
\eta_h\,\sigma_i^2\,(2\rho + \alpha) \;<\; 4.
\end{equation}
Across constraint modes, the bound binds at $\sigma_{\max}(A)$, which sets
the activity-step ceiling for the constraint-only iteration.
\end{corollary}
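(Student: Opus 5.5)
The plan is to reduce Schur stability of the $2\times 2$ map $M_i$ in \eqref{eq:linear-pcalm-mode} to conditions on its characteristic polynomial $z^2 - \tau z + \delta$, where $\tau = \operatorname{tr} M_i$ and $\delta = \det M_i$. The tool is the Jury (Schur--Cohn) criterion for real quadratics: both roots lie in the open unit disc if and only if
\begin{equation*}
|\delta| < 1, \qquad 1 - \tau + \delta > 0, \qquad 1 + \tau + \delta > 0.
\end{equation*}
The decoupling into independent $2\times 2$ modes via the SVD $A = U\Sigma V\tp$ with $B=0$ is taken as given from the setup preceding the corollary. So the whole argument is a computation on $M_i$ followed by checking which Jury inequality binds.

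\emph{Step 1: trace and determinant.} Write $a := 1 - \eta_h\rho\sigma_i^2$ for the top-left entry. The determinant is
\begin{equation*}
\delta = a(1 - \alpha\eta_h\sigma_i^2) + \eta_h\sigma_i\cdot\alpha\sigma_i a = a.
\end{equation*}
The $\alpha$ contributions cancel exactly, so $\delta = 1 - \eta_h\rho\sigma_i^2$. The trace is $\tau = a + 1 - \alpha\eta_h\sigma_i^2 = 2 - \eta_h\sigma_i^2(\rho + \alpha)$.

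\emph{Step 2: evaluate the three Jury conditions.}
\begin{itemize}
\item The condition at $z=1$ gives $1 - \tau + \delta = \alpha\eta_h\sigma_i^2$, which is strictly positive because $\alpha, \eta_h, \sigma_i > 0$. It never binds. This reflects the dual integrator moving the unit eigenvalue of pure integration strictly inside the disc.
\item The condition at $z=-1$ gives $1 + \tau + \delta = 4 - \eta_h\sigma_i^2(2\rho + \alpha)$. This is positive exactly when \eqref{eq:linear-pcalm-mode-stability} holds.
\item The remaining condition is $|\delta| < 1$, i.e.\ $0 < \eta_h\rho\sigma_i^2 < 2$. The lower bound is automatic. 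The upper bound follows from \eqref{eq:linear-pcalm-mode-stability}, since $2\eta_h\rho\sigma_i^2 \le \eta_h\sigma_i^2(2\rho+\alpha) < 4$.
\end{itemize}
Hence the conjunction of all three conditions is equivalent to \eqref{eq:linear-pcalm-mode-stability} alone. This gives both directions of the ``iff''.

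\emph{Step 3: binding mode and side remarks.} The left side of \eqref{eq:linear-pcalm-mode-stability} is increasing in $\sigma_i$, so the bound across modes binds at $\sigma_{\max}(A)$. Setting $\alpha = 0$ recovers the PC condition $\eta_h\sigma_i^2 2\rho < 4$. In that case the $z=1$ condition degenerates, consistent with a marginal dual eigenvalue at $1$.

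The only step needing care is applying the Jury criterion with strict inequalities, so that the boundary case is excluded and the ``iff'' is exact. I would state the criterion explicitly for monic real quadratics and cite it. If a self-contained argument is preferred, I would instead split into real and complex roots: complex roots have modulus $\sqrt{\delta}$, and real roots are handled by sign checks of the characteristic polynomial at $\pm 1$.
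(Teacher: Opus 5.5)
Your proposal is correct and follows the paper's route. The paper calls this the per-mode Jury bound: it records $\mathrm{tr}(M_i) = 2 - \eta_h\sigma_i^2(\rho+\alpha)$ and $\det(M_i) = 1 - \eta_h\rho\sigma_i^2$ and applies the Jury criterion to the $2\times 2$ characteristic polynomial. Your explicit check that the $z=1$ condition is automatic and that $|\det(M_i)|<1$ follows from the $z=-1$ condition is exactly what makes the ``iff'' tight, a step the paper leaves implicit.
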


The eigenvalues of $M_i$ are roots of $\mu^2 - \mathrm{tr}(M_i)\,\mu + \det(M_i) = 0$ and form a complex conjugate pair when the discriminant $\mathrm{tr}(M_i)^2 - 4\det(M_i)$ is negative; Appendix~\ref{app:linear-pcalm-radius} works out the precise complex-eigenvalue regime and shows the per-mode response is then a damped oscillation rather than monotone decay. This
corollary is not the full stability test when $B\neq 0$: the exact condition
remains $\mathrm{spr}(M)<1$, and readout/output scaling enters through $B$. In the
special case where $B\,v_i = \nu_i v_i$ is diagonal in the same right singular
basis as $A$, the same Jury calculation gives
\begin{equation*}
\eta_h\,\bigl(2\nu_i + (2\rho+\alpha)\sigma_i^2\bigr) < 4 .
\end{equation*}
Null modes ($\sigma_i = 0$) sit outside the constraint dynamics and are
governed by any other curvature in the problem, namely the $B$ block of $M$.

\subsection{Per-mode radius and the role of \texorpdfstring{$\alpha$}{alpha}}
\label{app:linear-pcalm-radius}

The characteristic polynomial of $M_i$ has trace and determinant
\begin{equation*}
\mathrm{tr}(M_i) \;=\; 2 - \eta_h\,\sigma_i^2\,(\rho + \alpha),
\qquad
\det(M_i) \;=\; 1 - \eta_h\,\rho\,\sigma_i^2.
\end{equation*}
Crucially, $\det(M_i)$ does not depend on $\alpha$; only the trace does. In the
complex-eigenvalue regime $\eta_h \sigma_i^2(\rho + \alpha)^2 < 4\alpha$, the
two roots are conjugate with $|\mu_\pm|^2 = \det(M_i)$, so
\begin{equation*}
|\mu_\pm| \;=\; \sqrt{\,1 - \eta_h\,\rho\,\sigma_i^2\,}.
\end{equation*}
For fixed $(\eta_h, \rho, \sigma_i)$, varying $\alpha$ slides the two
eigenvalues along the circle of this radius centered at the origin in the
complex plane; only the phase changes. Different singular values of $A$
contribute different circles, so the per-mode complex eigenvalues of $M$
populate an annulus
\begin{equation*}
\sqrt{\,1 - \eta_h\,\rho\,\sigma_{\max}(A)^2\,} \;\le\; |\mu| \;\le\; 1
\end{equation*}
in the complex plane. Two consequences:

\begin{enumerate}
\item The spectral radius of the complex-mode eigenvalues is set by
$\eta_h, \rho$, and $\sigma_{\max}(A)$ alone. The dual rate $\alpha$ controls
phase (oscillation frequency) but not the magnitude of decay. To accelerate
contraction one must push $\eta_h\rho\sigma_{\max}^2$ closer to the Jury
ceiling, not push $\alpha$.
\item As $\alpha$ varies through the boundary $\eta_h\sigma_i^2(\rho+\alpha)^2
= 4\alpha$, eigenvalues collide on the real axis and split into a complex
conjugate pair (or vice versa). At $\alpha = 0$ the per-mode block is upper
triangular with eigenvalues $1$ and $1 - \eta_h\rho\sigma_i^2$ on the real
axis; the $1$-eigenvalue is the dual block's invariance.
\end{enumerate}

Figure~\ref{fig:linear-dynamics-spectrum} (in the main text) shows this on a small skip MLP at fixed weights: eigenvalues of $M$ populate the annulus the per-mode bound predicts, and the activity and dual traces show monotone descent at $\alpha = 0$ and damped oscillation at $\alpha > 0$.

\subsection{Reduction to penalty PC at \texorpdfstring{$\alpha = 0$}{alpha = 0}}

At $\alpha = 0$ the dual is pinned, $\lambda(t) \equiv \lambda(0)$, and $M_i$ collapses to the upper-triangular block
\begin{equation*}
M_i\big|_{\alpha = 0} \;=\;
\begin{bmatrix}
1 - \eta_h\rho\sigma_i^2 & -\eta_h\,\sigma_i \\
0 & 1
\end{bmatrix}.
\end{equation*}
The dual mode is invariant (eigenvalue $1$), so the joint activity--dual map
is not Schur-stable. The activity block is gradient descent on the penalty
$\tfrac{\rho}{2}\|A h + b\|^2$ along $v_i$, with the standard penalty-PC
stability bound $\eta_h\rho\sigma_i^2 < 2$. Taking the $\alpha\downarrow 0$
limit of Corollary~\ref{cor:modewise-jury} recovers the same activity-block
inequality.

\subsection{Chain vs.\ skip}

The constraint part of the dynamics is written in $A$ and $\sigma_i(A)$;
chain and skip MLPs differ through this operator (Appendix~\ref{app:architectures}).
The supervised readout contributes the separate block $B$.

\paragraph{Chain.} The off-diagonal block of $A$ is $-W_i$. The depth
products $W_{j}W_{j-1}\cdots$ appear in the inverse/Green's function of this
lower-bidiagonal system, so depth mainly shows up through the spectrum and
conditioning of $A$. Large singular values tighten the activity-step ceiling;
small singular values create slow residual modes. Both effects contribute to
the finite-$T$ inference difficulty for deep linear PC.

\paragraph{Skip.} The off-diagonal block is $-(I + s W_i)$ with
$s = 1/\sqrt{L N}$. This makes $A_{\mathrm{skip}}$ a discrete difference
operator plus a small random perturbation. Under bounded weights,
$\sigma_{\max}(A_{\mathrm{skip}})$ remains controlled, so the Jury ceiling on
$\eta_h$ need not shrink just because $L$ grows. Depth can still appear through
small modes and conditioning, but the residual parameterization removes the
most direct step-size collapse.

\section{PC-ALM credit-wave propagation}
\label{app:credit-wavefront}

In PC-ALM, each hidden-layer weight update uses the composite credit
\begin{equation*}
e_i^t \;=\; \lambda_i^t \;+\; \rho\,r_i^t ,
\end{equation*}
whereas PC uses only $\rho r_i^t$. At the feasible KKT point the residual vanishes and $e_i^t \to \lambda_i^t$, the BP adjoint (Appendix~\ref{app:kkt-proof}). The finite-$T$ question is how quickly $e_i^t$ acquires useful magnitude and BP alignment at layers far from the readout.

\paragraph{Mode dynamics.}
In the fixed-weight linear regime of Appendix~\ref{app:linear-pcalm-dynamics} with the supervised block $B$ dropped, set $q := \eta_h \sigma^2$ for one singular mode of the constraint operator $A$. The PC-ALM mode polynomial is
\begin{equation}
\label{eq:credit-wave-mode-poly}
\mu^2 \;-\; \bigl[\,2 - q(\rho + \alpha)\,\bigr]\,\mu \;+\; (1 - \rho q) \;=\; 0,
\end{equation}
stable when $q(2\rho + \alpha) < 4$ (Corollary~\ref{cor:modewise-jury}). In the complex-eigenvalue regime the two roots are $\mu_\pm = r\,e^{\pm i\theta}$, with
\begin{equation*}
r \;=\; \sqrt{\,1 - \rho\,\eta_h\,\sigma^2\,},
\qquad
\cos\theta \;=\; \frac{2 - \eta_h\,\sigma^2(\rho + \alpha)}{2\sqrt{\,1 - \rho\,\eta_h\,\sigma^2\,}} .
\end{equation*}
The magnitude depends on $(\rho, \eta_h, \sigma)$ alone; $\alpha$ enters only through the phase (Appendix~\ref{app:linear-pcalm-radius}). For low-frequency modes, $q \ll 1$ and
\begin{equation}
\label{eq:credit-wave-low-freq}
r \;\approx\; \exp\!\left(-\tfrac{1}{2}\,\rho\,\eta_h\,\sigma^2\right),
\qquad
\theta \;\approx\; \sqrt{\,\alpha\,\eta_h\,}\;\sigma .
\end{equation}

\paragraph{Skip-MLP dispersion.}
Under residual mean-field scaling, $A$ is a layerwise first-difference operator plus a controlled random perturbation (Appendix~\ref{app:architectures}). Its long-wavelength singular values follow $\sigma(k) \approx 2\sin(k/2) \approx k$, with $k$ the layer frequency. Substituting into \eqref{eq:credit-wave-low-freq} gives the low-$k$ dispersion relation
\begin{equation}
\label{eq:credit-wave-dispersion}
\mu_\pm(k) \;\approx\; \exp\!\left(-\tfrac{1}{2}\,\rho\,\eta_h\,k^2 \;\pm\; i\,\sqrt{\alpha\,\eta_h}\,k\right).
\end{equation}
The real exponent is diffusive damping; the imaginary exponent is the phase of a damped wave with group velocity
\begin{equation*}
v_{\mathrm{credit}} \;=\; \sqrt{\,\alpha\,\eta_h\,} \qquad \text{layers per inference step.}
\end{equation*}
PC has no phase term: $\mu_{\mathrm{PC}}(k) \approx \exp(-\rho\,\eta_h\,k^2)$ is pure heat diffusion. After $T$ inner steps, an output-side credit impulse therefore reaches
\begin{equation}
\label{eq:credit-reach}
R_{\mathrm{PC}}(T) \;=\; O\!\left(\sqrt{\,\rho\,\eta_h\,T\,}\right),
\qquad
R_{\mathrm{ALM}}(T) \;\approx\; T\sqrt{\,\alpha\,\eta_h\,},
\end{equation}
the latter up to a diffusive front width $O\!\bigl(\sqrt{\rho\,\eta_h\,T}\bigr)$. The strict layerwise ``light cone'' is one layer per inner step in both cases; \eqref{eq:credit-reach} measures the depth at which the local credit has crossed a usable amplitude and BP alignment, not where it first arrives.

\paragraph{Predicted inflection step.}
Equating $R_{\mathrm{ALM}}(T)$ with the network depth $L$ gives the inflection observed in Section~\ref{sec:results-credit-alignment-inflection},
\begin{equation}
\label{eq:t-infl}
t_{\mathrm{infl}} \;\approx\; \frac{L}{\sqrt{\,\alpha\,\eta_h\,}},
\qquad
\alpha_{\mathrm{reach}} \;\approx\; \frac{L^2}{\eta_h\,T^2}.
\end{equation}
At the conditioning step $\eta_h = 1/\lambda_{\max}$ with $\lambda_{\max} = \sigma_{\max}^2(A) \approx 4$ under residual mean-field, $\alpha_{\mathrm{reach}} \approx 4(L/T)^2$. For $L = 64$, $T = 2L$ this gives $\alpha_{\mathrm{reach}} \approx 1$, consistent with the inflection observed near $t \approx 2L$ at $\alpha = 1$ in the headline cell. The Jury bound \eqref{eq:linear-pcalm-mode-stability} caps $\alpha < 4/(\eta_h \lambda_{\max}) - 2\rho$; at $\eta_h = 1/\lambda_{\max}$, $\rho = 1$ this is $\alpha < 2$, so the fastest stable reach in $T$ inner steps is $T\sqrt{2/\lambda_{\max}}$. Cells with $L$ exceeding this remain budget-limited even at the cap.

\section{Complexity comparison}
\label{app:complexity-comparison}
 
\paragraph{Two routes to BP-aligned gradients.}
Under the mean-field parameterization of \citet{innocenti2026infinitelimits}, the
equilibrated PC energy rescaling satisfies $s(\theta) = 1 + \Theta(L/N)$
(Cor.~4.2; Eq.~19), so the architectural PC--BP mismatch scales as
$\Theta(L/N)$ at activity equilibrium. Two routes therefore reduce
the PC--BP mismatch: widening the network so that $L/N$ is small, or
running PC-ALM at fixed width. These cost, respectively, a width budget
tied to depth and an adaptive auxiliary state.
 
\paragraph{A simplifying assumption on PC's inner budget.}
We compare the two routes at a target tolerance $\varepsilon$ on the gradient
mismatch. PC's gradient mismatch has two sources: an architectural mismatch
$s(\theta) - 1 = \Theta(L/N)$ that persists even at perfect activity
equilibrium, and a finite-inference mismatch from a finite inference budget and
incomplete equilibration. We assume PC runs at a fixed activity budget
$T_{\mathrm{PC}} = \Theta(1)$, independent of $\varepsilon$, large enough that
the finite-inference mismatch is negligible beside the architectural term, so
PC's only $\varepsilon$-dependence is its width $N_{\mathrm{w}}$. We make no
matching assumption for PC-ALM, whose inner count remains
$\varepsilon$-dependent; the accounting thus favors PC.
 
\paragraph{Resource comparison.}
Per training sample, PC at width
$N_{\mathrm{w}}$ runs $T_{\mathrm{PC}}$ activity-update steps (fixed, by the
assumption above); PC-ALM at fixed width $N_0$ runs $K_\varepsilon$ joint
primal--dual iterations, in the linear fixed-weight setting of
Proposition~\ref{prop:linear-convergence}, to drive the mismatch below $\varepsilon$.
The PC-ALM convergence rate used below is the linear
fixed-weight rate of Proposition~\ref{prop:linear-convergence}; the nonlinear ReLU
experiment in Figure~\ref{fig:budget-to-bp} is an empirical diagnostic
rather than a consequence of this asymptotic argument.
 
The PC-ALM inner count follows from linear convergence. By
Proposition~\ref{prop:linear-convergence}, the joint primal--dual iteration contracts
at rate $\mathrm{spr}(M) < 1$, where $\mathrm{spr}(\cdot)$ denotes the
spectral radius. Driving the gradient mismatch below $\varepsilon$ therefore requires
\begin{equation*}
  K_\varepsilon
  =
  O\left(\frac{\log(\varepsilon)}{\log \big ( \mathrm{spr}(M) \big ) }\right)
\end{equation*}
joint primal--dual iterations. For PC there is no auxiliary state, and by
the fixed-budget assumption no $\varepsilon$-dependent inner count. Its
gradient mismatch is set by the architectural rescaling
$s(\theta) - 1 = \Theta(L/N)$ \citep[Eq.~19]{innocenti2026infinitelimits}, so $\varepsilon$
is controlled by the choice of width, $N_{\mathrm{w}}(\varepsilon) = \Theta(L/\varepsilon)$.
Widening pays for tighter $\varepsilon$ linearly in width, while PC-ALM pays
for tighter $\varepsilon$ logarithmically in joint iterations. The dominant
costs are summarized in Table~\ref{tab:resource-comparison}. Symbol guide: $L$ is
depth, $B$ is batch size, $N_0$ is PC-ALM's fixed width, $N_{\mathrm{w}}$ is PC's
width, $K_\varepsilon$ is the number of joint PC-ALM primal--dual iterations
to reach tolerance $\varepsilon$, $T_{\mathrm{PC}}$ is PC's fixed
($\varepsilon$-independent) activity-inference budget, $\lambda$ denotes the
ALM multiplier state, and $M$ is the linear PC-ALM iteration matrix from
Proposition~\ref{prop:linear-convergence}.
 
\begin{figure}[t]
\centering
\includegraphics[width=0.65\linewidth]{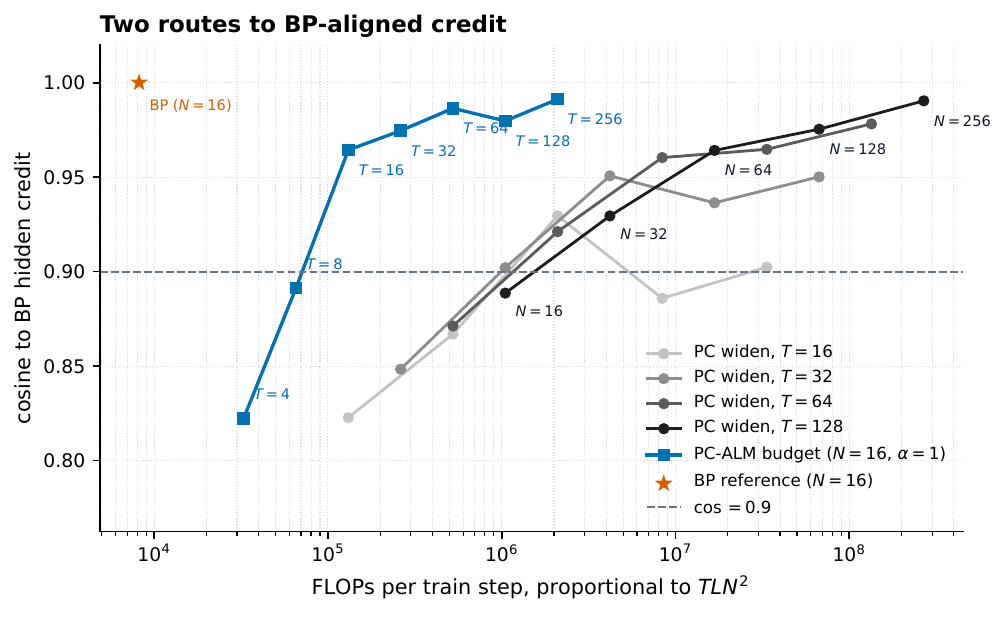}
\caption{Two routes to BP-aligned hidden credit at $L = 32$, ReLU, Fashion-MNIST. Here $T$ denotes the plotted per-sample update budget: activity steps for PC and joint primal-dual cycles for PC-ALM. Gray: PC at fixed inference budget $T$, sweeping hidden width $N$. Blue: PC-ALM ($\alpha = 1$) at fixed $N = 16$, sweeping $T$. Orange star: BP reference at $N = 16$. The horizontal axis is per-step compute, proportional to $TLN^2$. In this $L=32$, ReLU/Fashion-MNIST setting, the PC-ALM branch reaches $0.9$ cosine similarity to BP at roughly an order of magnitude less compute than the PC width sweeps shown.}
\label{fig:budget-to-bp}
\end{figure}

\begin{table}[t]
  \centering
  \renewcommand{\arraystretch}{1.4}
  \begin{tabular}{l|c|c|c}
     & Parameters & Memory (activations \,+\, multipliers) & Total time \\
    \hline
    PC-ALM (fixed width $N_0$)
      & $L N_0^2$ & $2 L N_0 B$ & $K_\varepsilon\, L N_0^2 B$ \\
    PC at width $N_{\mathrm{w}}$
      & $L N_{\mathrm{w}}^2$ & $L N_{\mathrm{w}} B$ & $T_{\mathrm{PC}}\, L N_{\mathrm{w}}^2 B$ \\
      \hline
    Ratio (ALM $/$ PC)
      & $N_0^2 / N_{\mathrm{w}}^2$
      & $2 N_0 / N_{\mathrm{w}}$
      & $K_\varepsilon N_0^2 / (T_{\mathrm{PC}} N_{\mathrm{w}}^2)$ \\
  \end{tabular}
  \caption{Resource costs for PC-ALM at fixed width $N_0$ and PC at width
  $N_{\mathrm{w}}$, both under mean-field scaling at matched gradient tolerance
  $\varepsilon$, with PC's inner budget $T_{\mathrm{PC}}$ held fixed and
  $\varepsilon$-independent.}
  \label{tab:resource-comparison}
\end{table}
 
\paragraph{Space asymptotics.}
Substituting $N_{\mathrm{w}} = \Theta(L/\varepsilon)$ into the bottom row of
Table~\ref{tab:resource-comparison} gives the asymptotic ratios. Parameters scale as
$\Theta(N_0^2 \varepsilon^2 / L^2)$ and activation memory as
$\Theta(N_0 \varepsilon / L)$. Both are pure architectural ratios, free of
the inner-iteration counts; PC-ALM dominates PC on space whenever
$N_0 \varepsilon \ll L$.
 
\paragraph{Time asymptotics.}
Substituting $N_{\mathrm{w}} = \Theta(L/\varepsilon)$ and $K_\varepsilon$ asymptotics gives
\begin{equation}
  \frac{\mathrm{time}_{\mathrm{ALM}}}{\mathrm{time}_{\mathrm{PC}}}
  \;=\;
  O\!\left(
    \frac{N_0^2 \varepsilon^2}{L^2}
    \cdot
    \frac{\log(\varepsilon)}{T_{\mathrm{PC}}\, \log \big (  \mathrm{spr}(M) \big )}
  \right),
\end{equation}
for fixed $L$, $N_0$ and a stable PC-ALM iteration matrix $M$ with
$\mathrm{spr}(M) < 1$. Under the fixed-budget assumption $T_{\mathrm{PC}} =
\Theta(1)$, the ratio is
$O(\lvert\varepsilon^2 \log(\varepsilon)\rvert)$. The architectural factor $N_0^2 \varepsilon^2
/ L^2$ drives the entire ratio to zero as $\varepsilon$ tightens, provided the PC-ALM spectral gap is bounded away from zero, since
$\varepsilon^2 \log(1/\varepsilon) \to 0$. PC-ALM therefore dominates PC on time
in the small-$\varepsilon$ limit. If
$\mathrm{spr}(M) \to 1$ with depth, weights, or conditioning, the constant
can degrade. A symmetric treatment of PC's inner budget, charging PC an
$\varepsilon$-dependent inference budget as well, would cancel the
$\log(1/\varepsilon)$ factor up to the differing conditioning of the two
inference problems.
\section{Experimental details}
\label{app:experimental-details}

For our experiments we closely follow the setup of \citet{innocenti2026infinitelimits}, using their residual MLP architecture (Appendix~\ref{app:architectures}) and mean-field parameterization. The parameterization stabilizes $\sigma_{\max}(A)$, so a default $\eta_h$ would suffice in principle. However, $\sigma_{\max}^2(A)$ is sensitive in shallow networks, and $\alpha$ and $\eta_h$ are coupled. We therefore compute an empirical $\lambda_{\max} := \sigma_{\max}^2(A)$ for each $(N, L)$ and dataset, and set $\eta_h = 1/\lambda_{\max}$. This places $\alpha$ in an interpretable $[0, 2]$ range. All networks are trained for one epoch on MNIST \citep{lecun1998gradient} and Fashion-MNIST \citep{xiao2017fashion}. We additionally swept $\eta_h$ for several PC networks on a tight grid, and observed negligibly improved performance at higher $\eta_h$ followed by collapse beyond $2/\lambda_{\max}$ as expected. The $\eta_h$ sweep was unable to improve PC's performance in any meaningful way compared to PC-ALM, which we use as justification for the default $\eta_h$ choice. We use a batch size of $64$.

\begin{figure}[t]
\centering
\includegraphics[width=\linewidth]{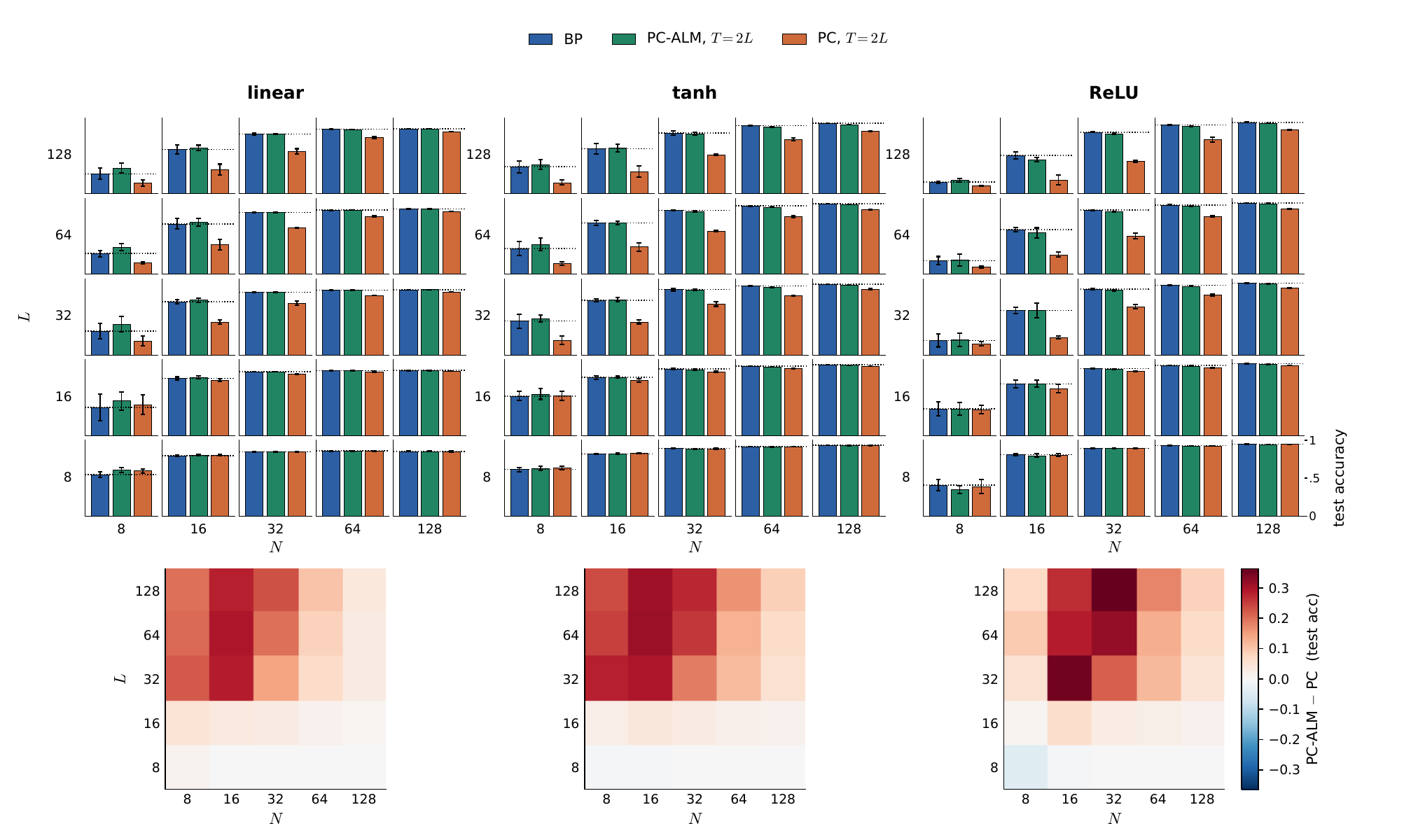}
\caption{MNIST counterpart of Figure~\ref{fig:nl-headline-fashion}. Same $(N, L)$ grid, three seeds, $\alpha=1$. Same regime structure: PC-ALM at $T=2L$ tracks BP across the grid; PC drops in deep narrow networks; the gain concentrates in deep, narrow cells.}
\label{fig:nl-headline-mnist}
\end{figure}

\begin{figure}[t]
\centering
\includegraphics[width=\linewidth]{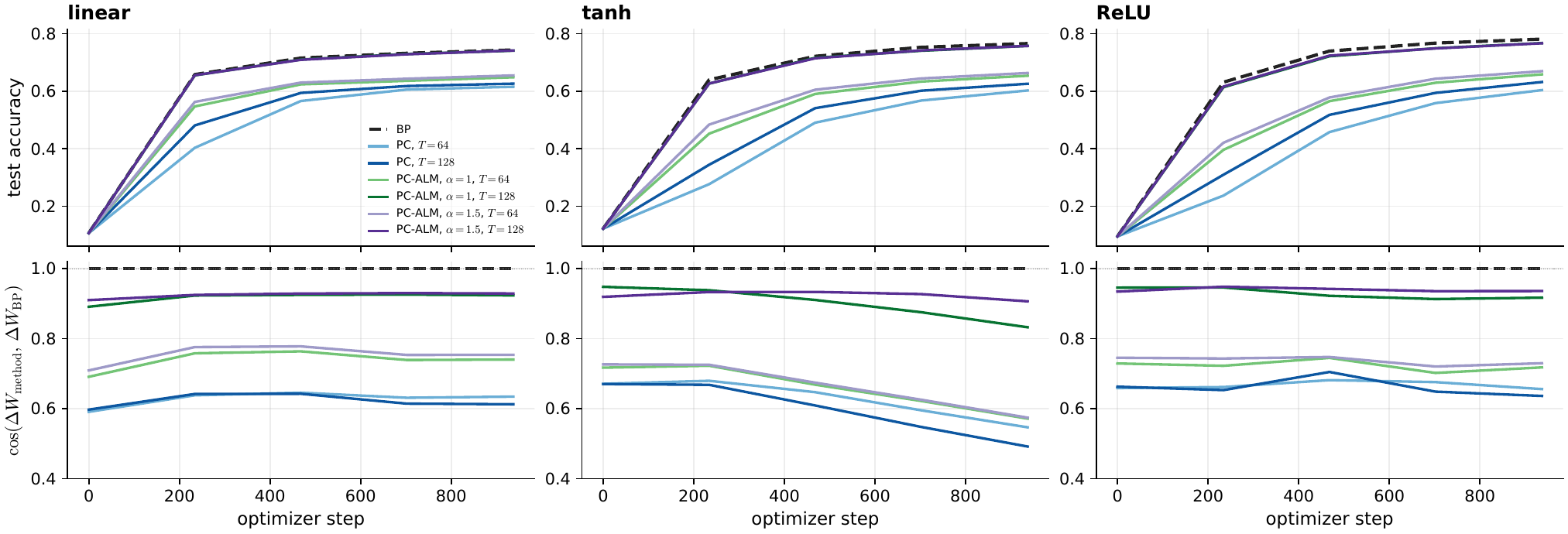}
\caption{Training curves for one epoch at $N=32$, $L=64$ on Fashion-MNIST across identity / tanh / ReLU.}
\label{fig:headline-trace}
\end{figure}

\section{Limitations}
\label{app:limitations}

\paragraph{Structural restrictions.}
The convergence proof for PC-ALM
(Appendix~\ref{app:linear-pcalm-dynamics},
Proposition~\ref{prop:linear-convergence}) and the credit-wave
analysis of Appendix~\ref{app:credit-wavefront} are restricted to networks
with identity activations and no bias terms. Our experiments use
activations (identity, tanh, ReLU) across the full
$(N, L) \in \{8, 16, 32, 64, 128\}^2$ grid and PC-ALM closes the PC-BP gap throughout,
but a convergence guarantee covering this setting is open. We also do not include bias terms in our implementation, opting for weights only.
This simplification to linear and bias-free networks
mirrors the broader PC literature: the saddle analysis of 
\citet{innocenti2024strict-saddles} likewise uses linear networks without biases for theoretical development, and nonlinear networks for empirical verification. The $\mu$PC scaling analysis of \citet{innocenti2025mupc} is likewise
linear and bias-free in its theoretical content while running nonlinear (bias-free) experiments. The analysis of \citet{song2024prospective} relies on linearizing a nonlinear PC network around a feedforward initialization.

\paragraph{Empirical scope.}
Experiments are restricted to MNIST and Fashion-MNIST, residual MLPs,
and one epoch of training. Broader evaluation on larger and more
diverse datasets, longer training horizons, and architectures beyond
residual MLPs (convolutional, attention-based) remains future work.
The width-depth grid covers $(N, L) \in \{8, 16, 32, 64, 128\}^2$,
which probes the regime relevant to the PC-BP gap but is small
relative to production networks.

\paragraph{Inference budgets.}
The correspondence between the weight update and the BP gradient (Appendix~\ref{app:kkt-proof}) only holds at equilibrium. Such an equilibrium may fail to be reached with a finite inference budget, causing gradient misalignment. Moreover, the inference budget necessary to reach equilibrium (within a tolerance) grows with network depth.

\section{Additional Related Work}
\label{app:related-work}

There is a large body of work on biologically plausible, local learning methods. Target propagation \citep{bengio2014autoencodersprovidecreditassignment} and difference target propagation \citep{lee2015difference} have been shown to be related to predictive coding \citep{millidge2022framework}. We believe that connections of target propagation to PC-ALM in particular merit investigation. The predictive coding literature often leads with a probabilistic framework \citep{Rao1999PC,friston2009predictive,millidge2021predictive,salvatori2025survey}; the framing of PC as a quadratic penalty on the constrained optimization problem is, to our knowledge, less common (though see \citet{seely2025sheaf}). 
One benefit of the PC formalism is that it can handle non-DAG topologies as studied in \citet{salvatori2022pc-graphs}. The AL framework also extends to non-DAG architectures; at least in a linear PCN, recurrent topologies change the KKT point but do not affect convergence results. With nonlinear activations, the primal-dual dynamics can support nonlinear systems with limit cycles and other nonlinear behavior, which is not possible in standard PC's gradient-flow inference; we leave this to future investigation.
PC has also been cast in terms of sheaf theory, which offers tools for analyzing recurrent topologies \citep{seely2025sheaf}; closely related is the work of \citet{bosca2026local}, which extends the sheaf-theoretic framing to ReLU MLPs with resulting PC-like inference and learning steps. From this sheaf-theoretic perspective, the primal-dual dynamics of PC-ALM can be viewed as an application of the homological programming methods of \citet{hansen2019distributed}. Prior work on ``PC approximates backprop'' is highlighted by \citet{innocenti2026infinitelimits}; earlier work showed that with a ``fixed prediction assumption'' (a modification of standard PC), error propagation can converge to BP \citep{millidge2022backprop}, albeit in a large number of iterations; this work was further studied in \citet{rosenbaum2022relationship}. Finally, we remind the reader that while PC and PC-ALM address one aspect of BP's bio-implausibility, they do not address weight transport (the gradient of the energy still uses the transpose of the layer's weights \citep{bredenberg2023formalizing}).

\end{document}